\setlist{noitemsep, topsep=0cm}
\tikzset{cross/.style={cross out, draw=black, minimum size=2*(#1-\pgflinewidth), inner sep=0pt, outer sep=0pt},cross/.default={1pt}}
\newtheorem{theorem}{Theorem}[section]
\newtheorem{proposition}[theorem]{Proposition}
\newtheorem{lemma}[theorem]{Lemma}
\newtheorem{corollary}[theorem]{Corollary}
\newtheorem{definition}[theorem]{Definition}
\theoremstyle{remark}
\newtheorem{remark}[theorem]{Remark}
\newtheorem{assumption}[theorem]{Assumption}
\theoremstyle{claim}
\theoremstyle{definition}
\newcommand{\norm}[2]{\left\lVert#1\right\rVert_{#2}}
\newcommand{\abs}[1]{\left\lvert #1 \right\rvert}
\newcommand{\pmat}[3]{\begin{pmatrix} #1 & #2 &\cdots & #3 \end{pmatrix}}
\newcommand{\dict}[1]{d (#1)}
\newcommand{\inprod}[2]{\left\langle#1, \; #2\right\rangle}
\newcommand{\inpnorm}[1]{\norm{#1}{}}
\newcommand{\dimension}{n}
\newcommand{\summ}[2]{\sum\limits_{#1}^{#2}}
\DeclareSymbolFont{symbolsC}{U}{pxsyc}{m}{n}
\DeclareMathOperator{\trace}{tr}
\DeclareMathOperator{\EE}{\mathsf{E}}
\DeclareMathOperator{\PP}{\mathsf{P}}
\DeclareMathOperator*{\minimize}{minimize}
\DeclareMathOperator*{\sbjto}{subject \; to}
\DeclareMathOperator*{\argmin}{argmin}
\DeclareMathOperator*{\argmax}{argmax}
\DeclarePairedDelimiterX\set[1]\lbrace\rbrace{#1}
\newcommand{\define}{\coloneqq}
\newcommand{\R}[1]{\mathbb{R}^{#1}}
\newcommand{\opt}{^\ast}
\newcommand{\transp}{^\top}
\newcommand{\dsize}{K}
\newcommand{\dictionary}{D}
\newcommand{\unitdictionaryset}{\mathcal{\dictionary}}
\newcommand{\nbhood}[2]{B(#1, #2)}
\newcommand{\closednbhood}[2]{B[#1, #2]}
\renewcommand{\geq}{\geqslant}
\renewcommand{\leq}{\leqslant}
\newcommand{\hilbert}{\mathbb{H}_{\dimension}}
\newcommand{\rv}{X}
\newcommand{\repvec}{f}
\newcommand{\error}{\epsilon}
\newcommand{\regulizer}{\delta}
\newcommand{\horizon}{T}
\newcommand{\sample}[1]{x_{#1}}
\newcommand{\samplevec}{x}
\newcommand{\dualvar}[1]{\eta_{#1}}
\newcommand{\hvar}[1]{h_{#1}}
\newcommand{\separator}[1]{\lambda_{#1}}
\newcommand{\separatorset}{\Lambda_{\regulizer} (\dictionary, \samplevec, \error)}
\renewcommand{\separatorset}[2]{\Lambda (#1, #2)}
\newcommand{\Depsdelta}{(\dictionary, \error, \regulizer)}
\newcommand{\encodermap}[2]{F_{#2} (#1)}
\newcommand{\codes}{\encodermap{\dictionary}{\samplevec}}
\newcommand{\unitencoder}[2]{H_{#2} (#1)}
\newcommand{\encodedcost}[3]{C(#1, #2, #3)}
\newcommand{\samplecost}[2]{C_{#2} (#1)}
\newcommand{\cost}{c}
\newcommand{\costatomset}{V_{\cost}}
\newcommand{\horder}{p}
\newcommand{\proj}{\pi}
\title[Dictionary Learning with Almost Sure Error Constraints]{Dictionary Learning with Almost Sure Error Constraints}
\author[Mohammed Rayyan Sheriff, Debasish Chatterjee]{MOHAMMED RAYYAN SHERIFF, DEBASISH CHATTERJEE}
\email{mohammedrayyan@sc.iitb.ac.in, dchatter@iitb.ac.in}
\address{Systems \& Control Engineering, Indian Institute of Technology Bombay, Powai, Mumbai~400076, India.}
\begin{document}
\maketitle 

\begin{abstract}
A dictionary is a database of standard vectors, so that other vectors or signals are expressed as linear combinations of dictionary vectors, and the task of learning a dictionary for a given data is to find a good dictionary so that the representation of data points has desirable features. Dictionary learning and the related matrix factorization methods have gained significant prominence recently due to their applications in Wide variety of fields like machine learning, signal processing, statistics, etc. In this article, we study the dictionary learning problem for achieving desirable features in the representation of a given data with almost sure recovery constraints. We impose the constraint that every sample is reconstructed properly to within a predefined threshold. This problem formulation is more challenging than the conventional dictionary learning, which is done by minimizing a regularised cost function. We make use of the duality results for linear inverse problems to obtain an equivalent reformulation in the form of a convex-concave min-max problem. The resulting min-max problem is then solved using gradient descent-ascent like algorithms.
\end{abstract}

\section{Introduction}
Signals have almost always been expressed as a linear combination of a standard database / collection of vectors. For instance, audio signals have historically been studied by expressing them as linear combination of Fourier basis, wavelets \cite{mallat1999wavelet}, \cite{daubechies1992ten} etc. It turns out that expressing signals in a well chosen basis helps in the study of the underlying characteristics of the signal than it is in its natural representation. In fact, it is often the case that signals in their natural representation are elements of a very high dimensional vector space even though there exists some low dimensional characteristics that could be exploited for its effective representation.

Almost all of natural signals are driven / outcomes of processes that are inherently low dimensional. Therefore, there is a lot of redundancy in data that is typically encountered in practise. Moreover, in certain applications \cite{elhamifar2013sparse},\cite{soltanolkotabi2014robust} data is concentrated around low dimensional subspaces or some such sort of clusters. Since every standard basis that is used in classical signal processing techniques like a Fourier basis, wavelets, DCT etc., is orthonormal, it turns out that most of the basis vectors do not contribute much due to the very nature of orthonormality. Therefore, representation using such bases does not maximally exploit the redundancy in the data. Later \emph{Frame theoretic} ideas \cite{daubechies1986painless}, \cite{casazza2012finite} have shown that by relaxing the orthonormality constraint and allowing the standard database to have more vectors than the effective dimension of the data, it allows us to exploit the redundancy in the data better than the standard orthonormal bases. However, it is a natural question to ask; what is the best database of vectors one has to use so that the given data is optimally represented or analysed. The recent advent of \emph{Dictionary Learning} techniques \cite{aharon2006k} \cite{olshausen1996emergence} \cite{olshausen1997sparse} \cite{mairal2010online} \cite{mairal2011task} is an attempt to accomplish this task. In these techniques, a collection of vectors called \emph{atoms} that constitute a database referred to as the \emph{dictionary} is learned from the data and for the data with a desired objective. It has been successfully shown \cite{aharon2006k} \cite{wright2010sparse} \cite{fevotte2009nonnegative} \cite{mairal2007sparse} that learning a dictionary that is adapted to the data often outperforms the classical techniques by a considerable margin in a plethora of signal processing applications. For a brief overview of dictionary learning techniques and their application, see \cite{tosic2011dictionary}.

One of primary feature that is central to the success of modern day signal processing techniques is sparsity. Sparsity based techniques have been successfully implemented in tasks like signal compression \cite{donoho2006compressed} \cite{candes2008introduction} \cite{candes2006near}, denoising \cite{dong2011sparsity} \cite{fang2012sparsity}, clustering \cite{ramirez2010classification} etc. In particular, the inception and success of Compressed Sensing \cite{donoho2006compressed} \cite{candes2008introduction} is noteworthy. Even though signals might not be sparse in their natural representation, they can be approximated reasonably well by a sparse linear combination of the atoms in some dictionary. For instance, natural images admit a reasonably approximate sparse representation in \( 2 \)-d Discrete Cosine Transform (DCT) basis, typically with only less than five percent of the coefficients being non-zero. Given the fact that such hidden sparsity is common in signals encountered in the current age of big data, the use of sparsity based signal processing techniques has become more compelling. Therefore, for a given data, it is desirable to learn a dictionary that allows the possibility of sparse representation of the signals without losing much information.

Let \( \samplevec \) be the signal that admits a reasonably approximate sparse representation \( \repvec_{\samplevec} \) (that is unknown and needs to be computed) in a dictionary \( \dictionary = \pmat{\dict{1}}{\dict{2}}{\dict{\dsize}} \). The sparse representation is computed by solving the following convex optimization problem
\begin{equation}
\label{eq:Basis-Pursuit-Denoising}
\repvec_{\error} \in 
\begin{cases}
\begin{aligned}
		& \argmin_{ \repvec } && \norm{ \repvec }{1} \\
		& \sbjto				&& 
		\norm{\samplevec - \dictionary \repvec}{2} \leq \error ,
\end{aligned}
\end{cases}
\end{equation}
where \( \error \) is a positive real number that signifies the permissible error in approximation. Sometimes, in applications like denoising etc., \( \error \) is the bound on the noise.  Ideally one would want to minimize the \( \ell_0 \)-pseudo norm \( \norm{\repvec}{0} \define \lvert \{ i : \repvec_i \neq 0 \} \rvert \), however, it leads to intractability for large scale problems. Fortunately, its convex relaxation \eqref{eq:Basis-Pursuit-Denoising} works reasonably well in most of the practical setting. Given a data set \( (\samplevec_t)_t \), we desire to use a ``good'' dictionary \( \dictionary \) that offers better sparse representation of the data. The primary goal of this article is to learn such a dictionary from a given data set by solving the optimization problem
\begin{equation}
\label{eq:DL-SCP}
\begin{cases}
\begin{aligned}
		& \minimize_{ \dictionary , \; (\repvec_t)_t } && \frac{1}{T} \sum\limits_{t = 1}^T \norm{ \repvec_t }{1} \\
		& \sbjto				&& 
		\begin{cases}
		    \dictionary \in \unitdictionaryset , \; \repvec_t \in \R{\dsize} \\
			\norm{\samplevec_t - \dictionary \repvec_t}{2} \leq \error_t \quad \text{for every \( t = 1, 2, \ldots, T \)} ,
		\end{cases}
\end{aligned}
\end{cases}
\end{equation}
where  \( \dimension \) and \( \dsize \) are some positive integers and the set \( \unitdictionaryset \) is some compact convex subset of \( \R{\dimension \times \dsize} \). The set \( \unitdictionaryset \) is typically chosen to be such that every dictionary vector is of at most unit length.

Alternatively, for a given dictionary \( \dictionary \), the sparse representation can also be obtained by solving the following regularised formulation
\begin{equation}
\label{eq:l1-regularized-sparse-coding}
\repvec'_{\gamma} \in \; \argmin_{\repvec} \; \Big( \norm{ f }{1} \; + \; \gamma \norm{x - \dictionary f}{2}^2 \Big) ,
\end{equation}
where \( \gamma > 0 \) is a regularization parameter. Evidently, the objective function is a weighted cost of sparsity inducing \( \ell_1 \)-penalty and the error in representation. This trade off is controlled by the regularization parameter \( \gamma \). For large values of \( \gamma \), the representation is more accurate but less sparse and vice versa. For a given data \( (\samplevec_t)_t \), assuming that one has the knowledge of a good value of \( \gamma \), dictionary learning is done conventionally by solving the following optimization problem
\begin{equation}
	\label{eq:l1-regularized-DL}
	\minimize_{(\repvec_t)_t, \; \dictionary \; \in \; \unitdictionaryset } \quad \frac{1}{T} \sum\limits_{t = 1}^T  \Big( \norm{ \repvec_t }{1} \; + \; \gamma \norm{\samplevec_t - \dictionary \repvec_t}{2}^2  \Big).
\end{equation}

Whenever \eqref{eq:Basis-Pursuit-Denoising} is strictly feasible, it is a known fact that the problems \eqref{eq:Basis-Pursuit-Denoising} and \eqref{eq:l1-regularized-sparse-coding} are equivalent. For a given value of \( \error > 0 \) (\( \gamma > 0 \)) there exists some \( \gamma (\error) > 0 \) (\( \error(\gamma) > 0 \)) such that the problems \eqref{eq:Basis-Pursuit-Denoising} and \eqref{eq:l1-regularized-sparse-coding} with parameters \( \error \) and \( \gamma (\error) \) (\( \error(\gamma) \) and \( \gamma \)) respectively, admit identical optimal solutions. This implies that, given a dictionary one could chose either \eqref{eq:Basis-Pursuit-Denoising} or \eqref{eq:l1-regularized-sparse-coding} on convenience of implementation to obtain the sparse representation of the data. However, the corresponding dictionary learning problems \eqref{eq:DL-SCP} and \eqref{eq:l1-regularized-DL} need not be equivalent. Almost all of notable recent work on dictionary learning has been concentrated towards solving
\eqref{eq:l1-regularized-DL}. On the contrary, there is little work done to solve the dictionary learning problem \eqref{eq:DL-SCP} in a meaningful manner. 

The primary goal of this article is to solve the dictionary learning problem \eqref{eq:DL-SCP} in situations where the knowledge of good values of \( (\error_t)_t \) are known. In fact, in many image processing applications like denoising, inpainting etc., where the image is corrupted by some noise, it is often the case that good statistical information of the noise is easily available. Since the values of \( (\error_t)_t \) that are to be used for such applications depend on the noise characteristics, good estimates of their values are available beforehand. Therefore, in situations like these, considering the dictionary learning problem in the formulation \eqref{eq:DL-SCP} is natural. Moreover, an alternate perspective to look at \eqref{eq:DL-SCP} is that we are putting a hard constraint on the permissible error, and then optimizing for sparsity. This is advantageous because, such a formulation provides the user the possibility to specify the maximum permissible error limit.

There is a tremendous body of work already done to address the problem \eqref{eq:l1-regularized-DL}, we want to highlight that such techniques can't be applied directly to solve \eqref{eq:DL-SCP}. In the sparse coding problem \eqref{eq:l1-regularized-sparse-coding}, the regularization parameter \( \gamma \) controls the tradeoff between sparsity and the error terms. For a given value of \( \gamma \), if the optimal solution to \eqref{eq:l1-regularized-sparse-coding} is \( \repvec'_{\gamma} \), one does not know the value of the error \( \norm{\samplevec - \dictionary \repvec'_{\gamma}}{2} \) incurred before actually solving the problem \eqref{eq:l1-regularized-sparse-coding}. If the specified error bound is \( \error \), there is no way to chose \( \gamma \) apriori such that the error bound \( \norm{\samplevec - \dictionary \repvec'_{\gamma}}{2} \leq \error \) is ensured. This is due to the fact that even though the problems \eqref{eq:Basis-Pursuit-Denoising} and \eqref{eq:l1-regularized-sparse-coding} are equivalent, the relation \( \error \longmapsto \gamma (\error) \) is not straight forward and unknown beforehand. In fact such a relation depends on the point \( \samplevec \) and the dictionary \( \dictionary \). Furthermore, if the sparse representations \( (\repvec_t)_t \) are obtained by solving \eqref{eq:l1-regularized-sparse-coding} by using the same value of \( \gamma \) for each \( t \), it is very likely that the error constraint \( \norm{\samplevec_t - \dictionary \repvec_t}{2} \leq \error_t \) is not satisfied for all \( t \). 

For the sake of argument, let us ignore the error constraint and simply consider the task of learning a dictionary by solving the following problem instead of \eqref{eq:l1-regularized-DL}.
\begin{equation}
\label{eq:sample-regularized-DL}
	\minimize_{(\repvec_t)_t, \; \dictionary \; \in \; \unitdictionaryset } \quad \frac{1}{T} \sum\limits_{t = 1}^T  \Big( \norm{ \repvec_t }{1} \; + \; \gamma_t \norm{\samplevec_t - \dictionary \repvec_t}{2}^2  \Big).
\end{equation}
Allowing independent regularizer for each \( t \) gives the user, more freedom and control which is apparent in the formulation \eqref{eq:DL-SCP}. It is clear that the performance of the dictionary learned this way for data analysis applications is critical to the regularisation parameters \( (\gamma_t)_t \) used to solve \eqref{eq:sample-regularized-DL}. Therefore, it is of high importance to know the right value of the regularization parameters that are best for the given data beforehand. One of the main challenges in learning a dictionary via this formulation or \eqref{eq:l1-regularized-DL} is that, apriori we do not know such regualarizer values. Typically, they are learned through cross validation techniques by solving multiple versions of the problem \eqref{eq:l1-regularized-DL} for the data with different values of the regularizers. This means that when the data set is large, which is typical of the modern times, one has to solve the dictionary learning problem multiple times, thereby increasing the demand on computational requirements.

It is to be noted that none of the problems \eqref{eq:DL-SCP}, \eqref{eq:l1-regularized-DL} and \eqref{eq:sample-regularized-DL} is jointly convex in arguments \( (\repvec_t)_t \) and \( \dictionary \). However, all of them are convex with respect to each argument given that the other is held fixed. Due to this reason, dictionary learning problems \eqref{eq:l1-regularized-DL} and \eqref{eq:sample-regularized-DL} are solved by alternating the minimization over \( (\repvec_t)_t \) and \( \dictionary \) iteratively. Therefore, the dictionary is updated from \( \dictionary \longmapsto \dictionary' \) in the following manner
\begin{equation}
\label{eq:DL-alternating-problems}
\begin{cases}
\begin{aligned}
	 \repvec'_t & \in \; \argmin_{\repvec_t} && \Big( \norm{ \repvec_t }{1} \; + \; \gamma_t \norm{\samplevec_t - \dictionary \repvec_t}{2}^2 \Big) \quad \text{for every \( t \), and} \\
     \dictionary' & \in \; \argmin_{\dictionary \; \in \; \unitdictionaryset} && \frac{1}{T} \sum\limits_{t = 1}^T \norm{\samplevec_t - \dictionary \repvec'_t}{2}^2  .
\end{aligned}
\end{cases}
\end{equation}
It is immediate that the dictionary update step is a Quadratic program which can be solved efficiently. In fact, based on co-ordinate descent methods to solve this optimization problem, in turns out that the alternating minimization to learn a good dictionary can be done online as in \cite{mairal2010online}. 

On the contrary, the dictionary learning problem \eqref{eq:DL-SCP} readily does not admit such an alternating minimization strategy. Once we fix \( (\repvec_t)_t\), the cost function in \eqref{eq:DL-SCP} remains constant for every dictionary such that \( \inpnorm{\samplevec_t - \dictionary \repvec_t} \leq \error_t \) for every \( t \). Therefore, there is no obvious way to update the dictionary variable. This is perhaps one of the primary reason why dictionary learning problem \eqref{eq:DL-SCP} hasn't received equal attention as \eqref{eq:l1-regularized-DL} in the mainstream. In practise, where solving the dictionary learning problem \eqref{eq:DL-SCP} is necessary, it is still done by a slightly different but similar alternating minimization technique
\begin{equation}
\label{eq:DL-error-constrained-alternating-problems}
\begin{cases}
\begin{aligned}
\repvec'_t & \in 
\begin{cases}
 \begin{aligned}
  & \minimize_{ \repvec_t }  &&  \norm{ \repvec_t }{1} \\
  & \sbjto  &&  \norm{\samplevec_t - \dictionary \repvec_t}{2} \leq \error_t
 \end{aligned}
\end{cases}
\text{for every \( t \), and} \\
\dictionary' & \in \; \minimize_{\dictionary \; \in \; \unitdictionaryset} \quad \frac{1}{T} \sum\limits_{t = 1}^T \norm{\samplevec_t - \dictionary \repvec'_t}{2}^2  .
\end{aligned}
\end{cases} 
\end{equation}
However, by replacing the convex problem of minimization over \( (\repvec_t)_t \) in \eqref{eq:DL-SCP} with its Lagrange dual, we get the following min-max-min problem equivalent to \eqref{eq:DL-SCP}
\begin{equation}
\label{eq:DL-error-constrained-with-coding-problem-dual}
\begin{cases}
\begin{aligned}
& \min_{ \dictionary } \max_{(\gamma_t)_t} \min_{(\repvec_t)_t} && \frac{1}{T} \sum\limits_{t = 1}^T  \Big( \norm{ \repvec_t }{1} \; + \; \gamma_t \big( \norm{\samplevec_t - \dictionary \repvec_t}{2}^2 - \error_t^2 \big) \Big) \\
& \sbjto && 
\begin{cases}
\dictionary \in \unitdictionaryset \\
\gamma_t > 0 \text{ for every } t .
\end{cases}
\end{aligned}
\end{cases}
\end{equation}
It is clear from \eqref{eq:DL-error-constrained-alternating-problems} and \eqref{eq:DL-error-constrained-with-coding-problem-dual} that the dictionary update \eqref{eq:DL-error-constrained-alternating-problems} completely disregards the maximization over the dual variables \( (\gamma_t)_t \), and treats them instead as constants.\footnote{The choice of notation \( \gamma_t \) for dual variables is intentional. In fact the equivalence between \eqref{eq:Basis-Pursuit-Denoising} and \eqref{eq:l1-regularized-sparse-coding} is immediate from the fact that for a given \( \error > 0 \), \( \gamma (\error) \) is precisely the optimal value of the dual variable in the Lagrange dual problem to \eqref{eq:Basis-Pursuit-Denoising}.} Therefore, there is no mathematical justification whatsoever on why dictionaries updated via \eqref{eq:DL-error-constrained-alternating-problems} should eventually be optimal solutions to \eqref{eq:DL-SCP}.
 
The main difficulty in solving \eqref{eq:DL-SCP} is that the dictionary variable does not appear directly in the objective function. It affects the feasibilty of a candidate \( \repvec_t \), and thereby affecting the cost indirectly. This makes it impossible to solve \eqref{eq:DL-SCP} through obvious alternating minimization techniques. We have considered a slightly more general problem formulation to \eqref{eq:DL-SCP} in this article, and have solved it. We go about doing this by replacing \eqref{eq:Basis-Pursuit-Denoising} with an equivalent convex-concave min-max formulation provided in \cite{sheriff2019LIP} that pushes the dictionary variable to the cost function. This allows us to update the dictionary in a meaningful manner that minimizes the objective function of \eqref{eq:DL-SCP} in each iteration. Learning a dictionary to solve \eqref{eq:DL-SCP} by methods provided in this article is not only mathematically justified but outperforms the conventional techniques like \eqref{eq:DL-error-constrained-alternating-problems} significantly. 

The Chapter unfolds as follows, in Section \ref{section:problem-statement-and-main-result} we formally define the dictionary learning problem and its associated encoding problem. In Section \ref{section:main-results-and-algo}, we provide our main results and algorithms. In Section \ref{section:proofs}  we provide mathematical proofs for the results in Section \ref{section:problem-statement-and-main-result}.







\section{The dictionary learning problem and its solution}
\label{section:problem-statement-and-main-result}
Let \( \dimension \) be a positive integer, \( \hilbert \) be an \( \dimension \)-dimensional Hilbert space equipped with an innerproduct \( \inprod{\cdot}{\cdot} \) and its associated norm \( \inpnorm{\cdot} \). For every \( x \in \hilbert \) and \( r > 0 \), let \( \nbhood{x}{r} \define \{ y \in \hilbert : \inpnorm{\samplevec - y} < \error \} \) and let \( \closednbhood{x}{r} \define \{ y \in \hilbert : \inpnorm{\samplevec - y} \leq \error \} \).

Every vector \( \samplevec \in \hilbert \) is \emph{encoded} as a vector \( \repvec ( \samplevec) \) in \( \R{\dsize} \) via the \emph{encoder} map \( \repvec : \hilbert \longrightarrow \R{\dsize} \). The \emph{reconstruction} of the encoded samples from the codes \( \repvec (\samplevec) \) is done by taking the linear combination \( \sum\limits_{i = 1}^{\dsize} \repvec_i (\samplevec) \dict{i} \) with some standard database of vectors \( \dictionary \define \pmat{\dict{1}}{\dict{2}}{\dict{\dsize}} \in \hilbert^{\dsize} \) referred to as the \emph{dictionary}. For a given encoder map \( \repvec \), since every vector \( \sample{} \in \hilbert \) is identified by its code \( \repvec (\samplevec) \), we shall refer to \( \repvec (\samplevec) \) as the \emph{representation} of \( \samplevec \) under the encoder \( \repvec \).  Our objective is to find a dictionary-encoder pair \( (\dictionary\opt , \repvec\opt) \) such that the representation has desirable features like sparsity etc., and the reconstruction is fairly accurate. We do so by formulating an optimization problem, such that the dictionary-encoder pair \( (\dictionary\opt , \repvec\opt) \) obtained from its optimal solution have the desirable features. We refer to the task of finding such a pair as the \emph{Dictionary Learning Problem}, and in short DLP. The DLP is studied in two components namely, the \emph{ecoding problem} and the \emph{dictionary learning} component.

\subsection{The encoding problem} 
The central task in representing a given data optimally, is the \emph{encoding problem}. For a given dictionary \( \dictionary \), the encoding problem is simply the task of encoding a signal \(  \samplevec \in \hilbert \) as another signal \( \repvec_{\dictionary} (\samplevec) \in \R{\dsize} \) such that \( \repvec_{\dictionary} (\samplevec) \) has desirable features like sparsity, and the reconstruction: \( x - \dictionary \repvec_{\dictionary} (\samplevec) \) is within the limits.

We would like to encode such that the reconstruction is similar to the original signal. Therefore, \( \repvec \in \R{\dsize} \) is a feasible representation of \( \samplevec \) if it satisfies \( \inpnorm{\samplevec - \dictionary \repvec} \leq \error (\samplevec) \), where \( \error : \hilbert \longrightarrow [0 , +\infty[ \) is a given \emph{error threshold function}. Some classic examples of error threshold function are
\begin{itemize}
    \item A constant function, where \( \error (x) = \epsilon \) for some \( \epsilon \geq 0 \).
    \item An SNR type function, where \( \error (x) = \epsilon \inpnorm{x} \) for some \( \epsilon \geq 0 \).
\end{itemize}
Ideally, if the data is not noisy, we would want to obtain exact reconstruction of the signals. However, permitting a small amount of error in the reconstruction allows us to encode signals to obtain other desirable features like sparsity etc., in its representation. In other situations like image denoising etc., where the data is corrupted by some bounded noise, \( \error \) is chosen based on the statistical properties of the noise.

It is desirable for the encoded representation to have certain characteristics like sparsity, minimum energy etc. For a given triplet \( (\samplevec , \dictionary , \error) \), there could be many feasible representations. The intended representation that has desirable features is selected by minimizing a certain cost function \( \cost : \R{\dsize} \longrightarrow [0, +\infty[ \). The particular characteristics desired in the representation depends on the type of cost function chosen. For instance, choosing the \( \ell_1 \)-norm : \( \cost (\cdot) = \norm{\cdot}{1} \) induces sparsity in the representation \cite{bach2011convex}, \cite{tibshirani1996regression}, \cite{donoho2006compressed}, and choosing the \( \ell_2 \)-norm : \( \norm{\cdot}{2} \), provides unique representation and group sparsity \cite{meier2008group}. We shall assume the following with regards to the cost function \( \cost \), and these assumptions are in force throughout the article.
\begin{assumption}
\label{assumption:cost-function}
The cost function \( \cost : \R{\dsize} \longrightarrow [0, +\infty[ \) satisfies the following.
\begin{itemize}
        \item \emph{Positive Homogeneity} : For every \( \alpha \geq 0 \) and \( \repvec \in \R{\dsize} \), we have \( \cost (\alpha \repvec ) = \alpha^{\horder} \cost (\repvec) \), where \( \horder > 0 \) is the order of homogeneity.
        \item \emph{Pseudo-Convexity} : The sublevel set \( \costatomset \define \{ \repvec \in \R{\dsize} : \cost (\repvec) \leq 1 \} \) is convex.
        \item \emph{Inf-Compactness} : The set \( \costatomset \) is compact.
\end{itemize}
\end{assumption}
It is easily seen that a cost function that satisfies Assumption \ref{assumption:cost-function} with unit order homogeneity is a \emph{guage} function corresponding to the set \( \costatomset \), and whenever the set \( \costatomset \) is symmetric about the origin, the cost function is a norm.
    
In many scenarios, the final cost function considered is obtained by adding a small penalty function to the actual cost function in order to obtain some other desirable properties. For instance, in the basis pursuit denoising problem, it is customary to add a small \( \ell_2 \)-penalty to the \( \ell_1 \)-cost in order to enforce uniqueness of the optimal solution. By considering a generic definition of cost function \( \cost (\cdot) \) as discussed, such adjustments to the actual cost function are easily incorporated.

Therefore, for a given dictionary \( \dictionary \), the optimal encoder map \( \repvec_{\dictionary} : \hilbert \longrightarrow \R{\dsize} \) is such that, for every \( \samplevec \in \hilbert \), \( \repvec_{\dictionary} (\samplevec) \) is an optimal solution to the following problem.
\begin{equation} 
\label{eq:coding-problem-absolute-error}
	\begin{cases}
	 \begin{aligned}
		& \minimize_{\repvec \; \in \; \R{\dsize}} && \cost ( \repvec ) \\
		& \sbjto && \inpnorm{ \samplevec - \dictionary \repvec } \leq \error (\samplevec) .
	\end{aligned}
	\end{cases}
\end{equation}
An example of the encoding problem which is of practical relevance is the classical \emph{Basis Pursuit Denoising} problem \cite{elad2006image}, \cite{candes2008introduction}, that arises in various applications, and in particular compressed sensing.
\begin{equation}
\label{eq:basis-pursuit-denoising}
\begin{cases}
\begin{aligned}
		& \minimize_{\repvec \; \in \; \R{\dsize}} && \norm{\repvec}{1} \\
		& \sbjto && \inpnorm{ \samplevec - \dictionary \repvec } \leq \error (\samplevec) .
\end{aligned}
\end{cases}
\end{equation}

Even though the problem \eqref{eq:coding-problem-absolute-error} arises naturally in many scenarios, one of the concerns is that the formulation \eqref{eq:coding-problem-absolute-error} lacks regularization. It is often solved by considering its equivalent regularized formulation like \eqref{eq:l1-regularized-sparse-coding}. We remedy this by introducing regularization with parameter \( \regulizer \geq 0 \) that can be set to \( 0 \) if not needed. Therefore, we consider the following encoding problem instead of \eqref{eq:coding-problem-absolute-error}.
\begin{equation} 
	\label{eq:coding-problem}
	\begin{cases}
	  \begin{aligned}
		& \minimize_{( \mathsf{c},\; \repvec ) \; \in \; \R{} \times \R{\dsize}}  && \quad \mathsf{c}^{\horder} \\
		& \sbjto							  &&
		\begin{cases}
			\big( \cost (\repvec) \big)^{1/\horder} \leq \mathsf{c} \\
            \inpnorm{ \samplevec - \dictionary \repvec } \leq \error + \regulizer \mathsf{c} .
		\end{cases}
	\end{aligned}
	\end{cases}
\end{equation}
When \( \regulizer = 0 \), we see that the feasible collection of \( \repvec \) is independent from the variable \( \mathsf{c} \). As a consequence we see that for every feasible \( \repvec \in \R{\dsize} \), the minimization over the variable \( \mathsf{c} \) is achieved for \( \mathsf{c} = \cost (\repvec) \). Thus, the encoding problem \eqref{eq:coding-problem} reduces to the more familiar formulation \eqref{eq:coding-problem-absolute-error}.

It might be surprising at first to see the rather unusual formulation \eqref{eq:coding-problem} of the encoding problem. Our formulation \eqref{eq:coding-problem} makes way for the possibility of \( \regulizer \) taking positive values, due to which we obtain several advantages:
\begin{itemize}[leftmargin = *]
    \item The encoding problem is always strictly feasible, which is easily seen by considering \( \mathsf{c} = \frac{1}{\regulizer} \inpnorm{\samplevec} \) and \( \repvec = 0 \). This is a crucial feature in the initial stages of learning an optimal dictionary, essentially when the data lies in a subspace of lower dimension \( m \), such that \( m, \dsize \ll \dimension \).
    
    \item A positive value of \( \regulizer \) provides regularization in the problem. Thus, one  can harvest the advantages that come from regularization like robustness, well conditioning etc. Even though it a parameter that needs to be learned from the data, in the context of sparse representation however, a small value can be chosen by the user depending on the maximum signal loss that can be tolerated.
    
    \item Considering \( \regulizer > 0 \) in the encoding problem leads to a useful fixed point characterization of the optimal dictionary. Such characterizations also lead to simple online algorithms that learn optimal dictionary.
\end{itemize}

We emphasise that the constraints in the encoding problem \eqref{eq:coding-problem} are convex and the cost function is convex-continuous and \emph{coercive}.\footnote{Recall that a continuous function \( \cost \) defined over an unbounded set \( U \) is said to be coercive in the context of an optimization problem, if :
\[
\lim\limits_{\inpnorm{u} \to \infty } \cost (u) = +\infty \; (- \infty),
\]
in the context of minimization (maximization) of \( \cost \) and the limit is considered from within the set \( U \).
}
Therefore, from the Weierstrass theorem, we conclude that whenever the coding problem is feasible, it admits an optimal solution. To this end, let us define
\begin{equation}
\label{eq:definition-optimal-cost-codes}
\Big( (\samplecost{\dictionary}{\samplevec})^{1/\horder}, \codes \Big) \define
	\begin{cases}
	  \begin{aligned}
		& \argmin_{( \mathsf{c},\; \repvec ) \; \in \; \R{} \times \R{\dsize}}  && \quad \mathsf{c}^{\horder} \\
		& \sbjto							  &&
		\begin{cases}
			\big( \cost (\repvec) \big)^{1/\horder} \leq \mathsf{c} \\
            \inpnorm{ \samplevec - \dictionary \repvec } \leq \error + \regulizer \mathsf{c} .
		\end{cases}
	\end{aligned}
	\end{cases}
\end{equation}
\begin{remark}
We observe that \( \samplecost{\dictionary}{\samplevec} \) is also the optimal value achieved in \eqref{eq:coding-problem}. In view of this, we shall abuse the notation slightly and say that \( \samplecost{\dictionary}{\samplevec} = +\infty \) and \( \codes = \emptyset \) whenever \eqref{eq:coding-problem} is infeasible.
\end{remark}
It should be noted that, both the encoding cost \( \samplecost{\dictionary}{\samplevec} \) and the set of optimal representations \( \codes \) are specific to a given value of regularization parameter \( \regulizer\), cost and error threshold functions \( \cost (\cdot ) \) and \( \error (\cdot) \) respectively, even though it is not specified in their notations.

\begin{definition}
\label{def:representability}
Let \( \dictionary \in \unitdictionaryset \), \( \error : \hilbert \longrightarrow [0, +\infty[ \) and let \( \regulizer \geq 0 \). A vector \( \samplevec \in \hilbert \) is said to be \( \Depsdelta \)-encodable if \( \samplecost{\dictionary}{\samplevec} < +\infty \).
\end{definition}

From the definitions it is immediate that for a given dictionary \( \dictionary \), \( \samplevec \in \hilbert \) is \( \Depsdelta \)-encodable if and only if the corresponding encoding problem \eqref{eq:coding-problem} is feasible. If so, every \( \Depsdelta \)-encodable vector \( \samplevec \) is thus encoded as an element \( \repvec_{\dictionary} (\samplevec) \in \codes \) while incurring a cost of \( \samplecost{\dictionary}{\samplevec} \). Thus, \( \repvec_{\dictionary} : \hilbert \longrightarrow \R{\dsize} \) is an optimal encoder for a given dictionary \( \dictionary \), if \( \repvec_{\dictionary}(\samplevec) \in \codes \) for every \( \samplevec \in \hilbert \). Naturally, the dictionary learning problem is to find a dictionary \( \dictionary\opt \) such that the average encoding cost is minimised.

\subsection{The Dictionary Learning Problem (DLP)} 
Let \( \PP \) be a probability distribution on \( \hilbert \) and \( \rv \) be a \( \PP \)-distributed random variable. Our objective is to find a dictionary that facilitates optimal encoding of the data, which are the samples drawn from \( \PP \). We consider the cost incurred to encode the random variable \( \rv \) using the dictionary \( \dictionary \) to be \( \samplecost{\dictionary}{\rv} \). Therefore, we seek to solve the following problem:
\begin{equation}
\label{eq:dictionary-learning-problem}
        \minimize_{\dictionary \; \in \; \unitdictionaryset} \quad \EE_{\PP} \big[ \samplecost{\dictionary}{\rv} \big] ,
\end{equation}
where \( \unitdictionaryset \) is a compact convex subset of \( \hilbert \). If we allow the dictionary vectors to have arbitrary lengths; every vector in \( \hilbert \) can be written as a linear combination with arbitrarily small coefficients. In the context of dictionary learning, the objective function in  \eqref{eq:dictionary-learning-problem} can be made arbitrarily small by considering a dictionary of arbitrarily long vectors, which makes the problem trivial. Therefore, it is a standard practice to consider an upper bound on the length of each dictionary vector, and for simplicity, this upper bound is chosen to be unity. Thus, the set of feasible dictionaries that is typically considered in practice is
    \begin{equation}
    \label{eq:feasible-dictionaries}
    \unitdictionaryset = \big\{ \pmat{\dict{1}}{\dict{2}}{\dict{\dsize}} : \inpnorm{\dict{i}} \leq 1 \text{ for all } i = 1,2,\ldots,\dsize \big\}.
    \end{equation}
However, in general the feasible dictionary set \( \unitdictionaryset \) could be different depending on applications. For instance, in the non-negative matrix factorization problem the elements of \( \unitdictionaryset \) are further constrained to be element wise non-negative.

For a positive integer \( \horizon \), let \( (\sample{t})_{t = 1}^{\horizon} \) be a given collection of samples drawn from the distribution \( \PP \). For \( t = 1,2,\ldots,\horizon \), let \( \samplecost{\cdot}{t} \define \samplecost{\cdot}{\sample{t}} \), \( \encodermap{\cdot}{t} \define \encodermap{\cdot}{\sample{t}} \) and \( \error_t \define \error (\sample{t}) \), then the dictionary learning problem for the sampled data \( (\sample{t})_t \) is
\begin{equation}
\label{eq:DL-samples}
\minimize_{\dictionary \; \in \; \unitdictionaryset} \ \frac{1}{\horizon} \summ{t = 1}{\horizon} \samplecost{\dictionary}{t} \ .
\end{equation}
In principle, one would want to solve \eqref{eq:dictionary-learning-problem}. However in most practical situations, the knowledge of the entire distribution \( \PP \) is unknown. Often what is available is either a large collection of samples drawn from \( \PP \) or a sequence (likely an iid sequence) of \( \PP \)-distributed samples. In the case when only iid samples drawn from \( \PP \) are available, we solve \eqref{eq:DL-samples} by taking the limit as \( T \longrightarrow +\infty \).

For the special case of \( \regulizer = 0 \), using the definition of the encoding cost \( \samplecost{\dictionary}{t} \), the dictionary learning problem \eqref{eq:DL-samples} reduces to the following more familiar form 
\begin{equation}
\label{eq:DL-conventional}
\begin{cases}
\begin{aligned}
        &\minimize_{\dictionary , \; (\repvec_t)_t}  && \frac{1}{\horizon} \summ{t = 1}{\horizon} \cost (\repvec_t) \\
	    & \sbjto			&&
	    \begin{cases}
	       \dictionary \in \unitdictionaryset , \\
	       \repvec_t \in \R{\dsize} , \text{ for all } t = 1,2,\ldots, \horizon , \\
	       \inpnorm{ \sample{t} - \dictionary \repvec_t } \leq \error_t \text{ for all } t = 1,2,\ldots, \horizon .
	    \end{cases}
	\end{aligned}
\end{cases}
\end{equation}

\section{Main results, algorithms and discussion}
\label{section:main-results-and-algo}
For \( \samplevec \in \hilbert \) such that \( \inpnorm{\samplevec} \leq \error (\samplevec) \), we immediately see that the pair \( \R{}_+ \times \R{\dsize} \ni (\mathsf{c}\opt, \repvec\opt ) \coloneqq (0,0) \) is feasible for \eqref{eq:coding-problem}. Moreover, since \( \cost (\repvec) > 0 \) for every \( \repvec \neq 0 \) we conclude that \( \samplecost{\dictionary}{\samplevec} = 0 \) for every \( \dictionary \in \unitdictionaryset \). As a result, every \( \samplevec \in \hilbert \) satisfying  \( \inpnorm{\samplevec} \leq \error (\samplevec) \) can be optimally represented by the zero vector \( 0 \in \R{\dsize} \) irrespective of the dictionary. Consequently, such samples do not play any role in the optimization over the dictionary variable \( \dictionary \) and can be ignored. Therefore, we assume w.l.o.g. that \( \inpnorm{\sample{t}} > \error_t \) for all \( t = 1,2,\ldots,T \) in the dictionary learning problem \eqref{eq:DL-samples}.

For every \( \samplevec \in \hilbert \) satisfying \( \inpnorm{\samplevec} > \error (\samplevec) \), let the function \( J_{\samplevec} : \unitdictionaryset \times \costatomset \longrightarrow [0 , +\infty] \) be defined by
\begin{equation}
\label{eq:J-map-def}
J_{\samplevec} (\dictionary, h) \define 
\begin{cases}
\begin{aligned}
& \sup_{\lambda} \; && r(\horder) \Big( \inprod{\lambda}{\samplevec} - \error (\samplevec) \inpnorm{\lambda} \Big)^{q(\horder)} - \Big( \regulizer \inpnorm{\lambda} + \inprod{\lambda}{\dictionary h} \Big) \\
& \text{s.t.} && \inprod{\lambda}{\samplevec} - \error \inpnorm{\lambda} >  0 ,
\end{aligned}
\end{cases}    
\end{equation}
where \( r(\horder) = \horder (1 + \horder) \) and \( q(\horder) = \frac{\horder}{1 + \horder} \). Notice that for every fixed \( \hvar{} \in \costatomset \), the objective function in \eqref{eq:J-map-def} is linear w.r.t. the dictionary variable \( \dictionary \), and thus, also convex in \( \dictionary \). Since \( J_{\samplevec} (\dictionary , \hvar{} ) \) is a pointwise supremum of this objective function, we conclude that the map \( \unitdictionaryset \ni \dictionary \longmapsto J_{\samplevec} (\dictionary , \hvar{}) \) is convex for every \( \hvar{} \in \costatomset \). From similar arguments, it also follows that the map \( \costatomset \ni \hvar{} \longmapsto J_{\samplevec} (\dictionary , \hvar{}) \) is convex for every \( \dictionary \).

The current formulation of the DLP \eqref{eq:DL-samples} is ill-posed in the sense that it does not admit a straightforward alternating minimization scheme which minimizes w.r.t. the variables \( (\repvec_t)_t \) and \( \dictionary \) while keeping the other fixed alternatingly. We resolve this issue by proposing an equivalent optimization problem in terms of the function \( J_{\samplevec} (\dictionary, \hvar{}) \), and show that it is equivalent to \eqref{eq:DL-samples} but also well posed. To this end, let \( J_t (\dictionary , \hvar{}) \define J_{\sample{t}} (\dictionary , \hvar{}) \) for every \( t = 1,2,\ldots,T \), and consider the optimization problem
\begin{equation}
\label{eq:DLP-in-J}
\begin{cases}
\begin{aligned}
& \minimize_{\dictionary , \; (\hvar{t})_t} && \frac{1}{T} \sum_{t = 1}^T J_t (\dictionary , \hvar{t}) \\
& \sbjto && 
\begin{cases}
\dictionary \in \unitdictionaryset \\
\hvar{t} \in \costatomset \quad \text{for all } t = 1,2,\ldots,T.
\end{cases}
\end{aligned}
\end{cases}
\end{equation}

\begin{lemma}
\label{lemma:equivalent-DL}
Consider the dictionary learning problem \eqref{eq:DL-samples} and \eqref{eq:DLP-in-J} for the given data  \( (\sample{t})_t \) such that \( \inpnorm{\sample{t}} > \error_t \) for every \( t = 1,2,\ldots,T \). The optimization problem \eqref{eq:DLP-in-J} is equivalent to the dictionary learning problem \eqref{eq:DL-samples} in the sense that 
\begin{enumerate}[leftmargin = *, label = \rm{(\roman*)}]
\item the optimal values of \eqref{eq:DL-samples} and \eqref{eq:DLP-in-J} are identical

\item every optimal solution \( (\dictionary\opt, (\hvar{t}\opt)_t) \) to \eqref{eq:DLP-in-J} can be computed from one of the optimal solution \( (\dictionary\opt, \repvec_t\opt)_t \) to \eqref{eq:DL-samples}, and vice versa.
\end{enumerate}
\end{lemma}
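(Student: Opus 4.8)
The overall plan is to derive the equivalence from a \emph{pointwise} identity relating the encoding cost $\samplecost{\dictionary}{\samplevec}$ to the map $J_{\samplevec}$, and then to exploit that in \eqref{eq:DLP-in-J} the auxiliary variables $\hvar{1},\dots,\hvar{\horizon}$ are decoupled across $t$. The first and central step is to prove that for every $\samplevec\in\hilbert$ with $\inpnorm{\samplevec}>\error(\samplevec)$ and every $\dictionary\in\unitdictionaryset$,
\[
\samplecost{\dictionary}{\samplevec}\;=\;\min_{\hvar{}\in\costatomset}\,J_{\samplevec}(\dictionary,\hvar{}),
\]
with the convention that both sides are $+\infty$ when \eqref{eq:coding-problem} is infeasible, and moreover that $\hvar{}\opt\in\costatomset$ attains the right-hand minimum \emph{if and only if} $(\samplecost{\dictionary}{\samplevec})^{1/\horder}\,\hvar{}\opt$ is an optimal solution of \eqref{eq:coding-problem}.

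I would obtain this via the change of variables $\repvec=\mathsf{c}\hvar{}$ in \eqref{eq:coding-problem}: positive homogeneity of $\cost$ makes the constraint $(\cost(\repvec))^{1/\horder}\le\mathsf{c}$ equivalent, for $\mathsf{c}>0$ (and $\mathsf{c}=0$ is infeasible precisely because $\inpnorm{\samplevec}>\error(\samplevec)$), to $\hvar{}=\repvec/\mathsf{c}\in\costatomset$, so that
\[
\samplecost{\dictionary}{\samplevec}\;=\;\inf_{\hvar{}\in\costatomset}\ \inf\big\{\,\mathsf{c}^{\horder}\;:\;\mathsf{c}\geq 0,\ \inpnorm{\samplevec-\mathsf{c}\,\dictionary\hvar{}}\le\error(\samplevec)+\regulizer\mathsf{c}\,\big\}.
\]
The inner problem is a one–dimensional convex program, (strictly) feasible exactly because $\inpnorm{\samplevec}>\error(\samplevec)$; so Lagrangian strong duality holds, and computing its dual — dualizing the norm inequality through $\inpnorm{v}=\sup_{\inpnorm{\lambda}\le\mu}\inprod{\lambda}{v}$ and evaluating the relevant conjugate of $t\mapsto t^{\horder}$ on $[0,+\infty[$, which is where the constants $r(\horder)=\horder(1+\horder)$, $q(\horder)=\horder/(1+\horder)$ and the open domain $\inprod{\lambda}{\samplevec}-\error(\samplevec)\inpnorm{\lambda}>0$ come from — reproduces exactly $J_{\samplevec}(\dictionary,\hvar{})$ of \eqref{eq:J-map-def}. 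This is the step where the duality theory for the encoding / linear inverse problem from \cite{sheriff2019LIP} does the heavy lifting, and the ``if and only if'' for optimizers falls out of the attainment and complementary–slackness conditions of the same duality together with the bijection $\repvec\leftrightarrow\repvec/(\samplecost{\dictionary}{\samplevec})^{1/\horder}$.

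Granting this identity, part (i) is immediate: applying it with $\samplevec=\sample{t}$ (legitimate since $\inpnorm{\sample{t}}>\error_t$ for all $t$ by hypothesis) gives $\samplecost{\dictionary}{t}=\min_{\hvar{t}\in\costatomset}J_t(\dictionary,\hvar{t})$, whence
\[
\min_{\dictionary\in\unitdictionaryset}\ \frac{1}{\horizon}\sum_{t=1}^{\horizon}\samplecost{\dictionary}{t}\;=\;\min_{\dictionary\in\unitdictionaryset}\ \frac{1}{\horizon}\sum_{t=1}^{\horizon}\ \min_{\hvar{t}\in\costatomset}J_t(\dictionary,\hvar{t})\;=\;\min_{\substack{\dictionary\in\unitdictionaryset\\ \hvar{1},\dots,\hvar{\horizon}\in\costatomset}}\ \frac{1}{\horizon}\sum_{t=1}^{\horizon}J_t(\dictionary,\hvar{t}),
\]
the last equality being the elementary fact that a sum whose $t$-th summand carries its own private variable $\hvar{t}$ may be minimized jointly. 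For part (ii): from an optimal $(\dictionary\opt,(\repvec_t\opt)_t)$ of \eqref{eq:DL-samples}, each $\repvec_t\opt$ is optimal for the encoding problem at $(\dictionary\opt,\sample{t})$, so by the optimizer correspondence $\hvar{t}\opt\define\repvec_t\opt/(\samplecost{\dictionary\opt}{t})^{1/\horder}$ lies in $\costatomset$ and attains $\min_{\hvar{}}J_t(\dictionary\opt,\hvar{})=\samplecost{\dictionary\opt}{t}$; combined with (i), $(\dictionary\opt,(\hvar{t}\opt)_t)$ is optimal for \eqref{eq:DLP-in-J}. Conversely, from an optimal $(\dictionary\opt,(\hvar{t}\opt)_t)$ of \eqref{eq:DLP-in-J}, optimality forces $J_t(\dictionary\opt,\hvar{t}\opt)=\samplecost{\dictionary\opt}{t}$ for every $t$, so $\hvar{t}\opt$ is a minimizing $\hvar{}$ and $\repvec_t\opt\define(\samplecost{\dictionary\opt}{t})^{1/\horder}\,\hvar{t}\opt$ is an optimal code; then $(\dictionary\opt,(\repvec_t\opt)_t)$ is feasible for \eqref{eq:DL-samples} with objective equal to the common optimal value, hence optimal.

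The genuinely delicate part is the pointwise identity itself: carrying out the scalar–program duality so that its dual is \emph{literally} \eqref{eq:J-map-def} — matching the exponents $r(\horder),q(\horder)$ and the strict domain condition — and making the feasibility/attainment hypotheses airtight uniformly in $\regulizer\ge0$. In particular, one must handle the degenerate regime $\regulizer=0$ and the pairs $(\dictionary,\hvar{})$ for which the inner scalar program is infeasible, verifying that there the supremum in \eqref{eq:J-map-def} diverges so that such $\hvar{}$ cannot be a minimizer; and one should confirm that $J_{\samplevec}(\dictionary,\cdot)$ does attain its minimum over the compact convex set $\costatomset$ (being a pointwise supremum of affine functions, it is lower semicontinuous), which is what licenses writing ``$\min$'' throughout. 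Everything downstream of the pointwise identity — the decoupling in part (i) and the two-way transfer of optimal solutions in part (ii) — is routine bookkeeping.
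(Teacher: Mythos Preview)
Your proposal is correct and follows essentially the same route as the paper: both arguments reduce to the pointwise identity $\samplecost{\dictionary}{\samplevec}=\min_{\hvar{}\in\costatomset}J_{\samplevec}(\dictionary,\hvar{})$ together with the corresponding bijection of optimizers $\hvar{}\opt\leftrightarrow(\samplecost{\dictionary}{\samplevec})^{1/\horder}\hvar{}\opt$, and then exploit separability across $t$. The only difference is cosmetic: the paper invokes \cite{sheriff2019LIP} directly for that identity and for the description of the minimizing set $H_t(\dictionary)$, whereas you sketch the change of variables $\repvec=\mathsf{c}\hvar{}$ and the scalar Lagrangian duality that the cited reference carries out.
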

\begin{proof}
For a fixed \( \dictionary \), observe that minimization over the variables \( (\hvar{t}')_t \) in \eqref{eq:DLP-in-J} is separable, and the joint minimization problem separates into the individual problems: \( \min\limits_{\hvar{t} \; \in \; \costatomset} \ J_t (\dictionary , \hvar{t}) \) for each \( t = 1,2 \ldots, \horizon \). Substituting from the definition \eqref{eq:J-map-def} of \( J_t (\dictionary , \hvar{t}) \), we see that solving the individual problems for each \( t \) is equivalent to solving the min-max problem
\begin{equation}
\label{eq:coding-problem-inf-sup-equivalent}
\begin{cases}
\begin{aligned}
& \min_{\hvar{t} \; \in \; \costatomset} \; \sup_{\separator{t}} \ \ && r (\horder) \Big( \inprod{\separator{t}}{\sample{t}} - \error_t \inpnorm{\separator{t}} \Big)^{q (\horder)} - \Big( \regulizer \inpnorm{\separator{t}} + \inprod{\separator{t}}{\dictionary \hvar{t}} \Big) \\
& \sbjto &&  \inprod{\separator{t}}{\sample{t}} - \error_t \inpnorm{\separator{t}} > 0 \; .
\end{aligned}
\end{cases}
\end{equation}
We know from \cite{sheriff2019LIP} that the optimal value of the min-sup problem \eqref{eq:coding-problem-inf-sup-equivalent} is equal to the encoding cost \( \encodedcost{\dictionary}{\sample{t}} \). Therefore, solving for the minimization over \( (\hvar{t})_t \) in \eqref{eq:DLP-in-J} we immediately see that \eqref{eq:DLP-in-J} reduces to
\[
\min_{\dictionary \in \; \unitdictionaryset} \ \frac{1}{\horizon} \summ{t = 1}{T} \samplecost{\dictionary}{t} \; .
\]
Thus, the optimal values of \eqref{eq:DL-samples} and \eqref{eq:DLP-in-J} and their respective set of optimal dictionaries are identical.

From \cite{sheriff2019LIP}, we also know that for each \( t = 1,2,\ldots,T \), the minimization over the variable \( \hvar{t} \) is achieved and \( H_t (\dictionary) \define (\samplecost{\dictionary}{t})^{-1/\horder} \cdot \encodermap{\dictionary}{t} \) is the set of minimizers. Suppose that \( (\dictionary\opt , (\repvec_{t}\opt)_t ) \) is an optimal solution to the dictionary learning problem \eqref{eq:DL-samples}, then it immediately follows that \( (\dictionary\opt , (\hvar{t}\opt)_t ) \) is an optimal solution to \eqref{eq:DLP-in-J} if \( \hvar{t}\opt = (\samplecost{\dictionary\opt}{t})^{-1/\horder} \cdot \encodermap{\dictionary\opt}{t} \) for each \( t \). Similarly, if \( (\dictionary' , (\hvar{t}')_t ) \) is an optimal solution to \eqref{eq:DLP-in-J}, we first see that 
\[
J_t (\dictionary' , \hvar{t}') \; = \; \min_{\hvar{t} \in \costatomset} \; J_t (\dictionary' , \hvar{t}) = \samplecost{\dictionary'}{t} \; ,
\]
which immediately implies that \( (\dictionary' , (\repvec'_t)_t) \) is an optimal solution to the dictionary learning problem \eqref{eq:DL-samples} if \( \repvec'_t = \big( J_t (\dictionary' , \hvar{t}') \big)^{1/\horder} \cdot \hvar{t}' \) for every \( t = 1,2,\ldots,T \).
\end{proof}

We propose to solve the dictionary learning problem \eqref{eq:DL-samples} by finding a solution to its equivalent problem \eqref{eq:DLP-in-J} instead. A simple and straightforward method to apply is to minimize over one of the variables \( (\hvar{t})_t \) and \( \dictionary \) by keeping the other one fixed and then alternate.
\begin{algorithm}[h]
\caption{Batch-wise alternating minimization algorithm to solve the DLP}
\label{algo:general-DL-algo}
\KwIn{The data \( \mathcal{X} \subset \hilbert \) which is a finite collection of points, a positive integer \( \dsize \), cost and error threshold functions
\( \cost \) and \( \error \) respectively, the regularizer \( \regulizer \geq 0 \).}
\KwOut{A dictionary \( \dictionary\opt \) which is at least a stationary point to \eqref{eq:DL-samples} and the corresponding representation vectors \( (\repvec_t\opt)_t \) for the data.}

\nl \textit{Remove irrelevant samples} :  Discard every \( \samplevec \in \mathcal{X} \) that satisfies \( \inpnorm{\samplevec} \leq \error (\samplevec) \), and let \( (\sample{t})_{t = 1}^{\horizon} \) be the remaining true data samples.

\nl \textit{Initialization} : Compute or generate an initial dictionary \( \dictionary_0 \).

\nl Set \( \dictionary' \longleftarrow \dictionary_0 \).

\nl \textbf{Iterate till stopping criteria is met.}

\textit{Updating the variables \( (\hvar{t})_t \)} : Compute
 \begin{equation}
    \label{eq:alternating-minimization-LIP}
     (\hvar{t}')_t \; \in \; \argmin_{(\hvar{t})_t \; \subset \; \costatomset} \ \frac{1}{T} \summ{t = 1}{T} J_t (\dictionary' , \hvar{t} ) ,
 \end{equation}
by either solving the encoding problem \eqref{eq:coding-problem} directly or its equivalent min-max form \eqref{eq:coding-problem-inf-sup-equivalent} via Algorithm \ref{algo:LIP} for each \( t = 1,2,\ldots,T \).
 
\textit{Updating the dictionary} : Using the collection \( (\hvar{t}')_t \) computed from \eqref{eq:alternating-minimization-LIP} solve the optimization problem via Algorithm \ref{algo:D-update}
\begin{equation}
\label{eq:alternating-minimization-DL}    
\dictionary' \; \in \; \argmin_{\dictionary \; \in \; \unitdictionaryset} \; \frac{1}{T} \summ{t = 1}{T} J_t (\dictionary , \hvar{t}')
\end{equation}
 
\nl \textbf{Repeat}

\nl \textbf{Output the dictionary and codes.}
\end{algorithm}

Observe that due to convexity of \( J_t(\dictionary , \hvar{t}) \) in individual arguments, both of the optimization problems \eqref{eq:alternating-minimization-LIP} and \eqref{eq:alternating-minimization-DL} are convex. Moreover, we shall establish that both of these minimization problems and particularly, the optimization over the dictionaries in \eqref{eq:alternating-minimization-DL} admit a well defined and non-trivial optimal solution. This is in complete contrast to the ill-posedness of the original formulations \eqref{eq:DL-conventional}, \eqref{eq:DL-SCP} where no such meaningful alternating minimization techniques exist.

\subsubsection*{Minimization w.r.t. the variables \( (\hvar{t})_t \)} 

\noindent For a given dictionary \( \dictionary' \), consider the problem \eqref{eq:alternating-minimization-LIP}. We recall from the proof of Lemma \ref{lemma:equivalent-DL} that the minimization over variables \( (\hvar{t})_t \) is separable, and for each \( t = 1,2,\ldots,T \), the problem \( \min\limits_{\hvar{t} \in \costatomset} \ J_t (\dictionary' , \hvar{t}) \) results in solving the min-max problem \eqref{eq:coding-problem-inf-sup-equivalent} by computing a saddle point via Algorithm \ref{algo:LIP}. We have reproduced the same algorithm to compute a saddle point of min-max problems equivalent to LIPs from Chapter 2 for convenience.

\begin{algorithm}[h]
\label{algo:LIP}
\caption{Projected gradient descent algorithm to solve \eqref{eq:alternating-minimization-LIP}}
\KwIn{Problem data: \( \samplevec, \ \dictionary , \ \error , \ \regulizer , \ \cost \).}
\KwOut{An optimal solution \( \repvec \in \codes \) and the optimal value \( \samplecost{\dictionary}{\samplevec} \).}

\nl Proceed only if \( \inpnorm{\samplevec} > \error \), else output \( 0 \).

\nl Initialize \( \hvar{} \) and \( \separator{} \).

\nl \textbf{Iterate till convergence}

\quad Iterate \( M \) times
\[
\separator{} \longleftarrow \separator{} \; + \;  \alpha \left( \frac{\horder^2 \big( \samplevec - \frac{\error}{\inpnorm{\separator{}}} \separator{} \big)}{\big( \inprod{\separator{}}{\samplevec} - \error \inpnorm{\separator{}} \big)^{\frac{1}{1 + \horder}}} \; - \; \frac{\regulizer}{\inpnorm{\separator{}}} \separator{}  \; - \; \dictionary (\hvar{}) \right)
\]

\quad \emph{Update} : \( \hvar{} \longleftarrow \pi_{\cost} \left(  \hvar{} \; + \; \beta \big( \dictionary\transp (\separator{}) \big) \right) \)

\nl \textbf{Repeat}

\nl \textbf{Output}: \( \samplecost{\dictionary}{\samplevec} = \inprod{\separator{}}{\dictionary (\hvar{})} \) and \( \repvec = \samplecost{\dictionary}{\samplevec} \cdot \hvar{} \).

\end{algorithm}

An interesting observation to be made here is that the set of minimizers \( H_t (\dictionary') \) to \eqref{eq:alternating-minimization-LIP} satisfies \( H_t (\dictionary') \; = \big(\samplecost{\dictionary'}{t} \big)^{1/\horder} \cdot \encodermap{\dictionary'}{t} \). Therefore, if we have access to a black box which solves the encoding problem \eqref{eq:coding-problem} and provides us an element \( \repvec'_t \in \encodermap{\dictionary}{t} \) and the optimal cost \( \samplecost{\dictionary'}{t} \), a solution \( \hvar{t}' \in H_t(\dictionary') \) can be obtained simply by scaling the black box solution \( \repvec'_t \) appropriately. Since the subsequent step to compute a ``good'' dictionary requires a solution \( (\hvar{t}')_t \in H_t (\dictionary') \) and not the black box solution \( \repvec'_t \), the fact that they are scalar multiples of each other nullifies the need to solve the min-max problem again only to compute \( \hvar{t}' \).

\subsubsection*{Minimization w.r.t. the dictionary variable.}

\noindent For a given collection \( (\hvar{t})_t \subset \costatomset \), let us consider the dictionary update step \eqref{eq:alternating-minimization-DL} in Algorithm \ref{algo:general-DL-algo}.  On substituting for \( J_t \) from \eqref{eq:J-map-def}, the minimization problem \eqref{eq:alternating-minimization-DL} over the dictionaries becomes
\[
\min_{\dictionary \; \in \; \unitdictionaryset} \ \frac{1}{\horizon} \summ{t = 1}{\horizon} 
\begin{cases}
\begin{aligned}
& \sup_{\separator{t} } \; && r(\horder) \Big( \inprod{\separator{t}}{\sample{t}} -  \error_t \inpnorm{\separator{t}} \Big)^{q(\horder)} - \Big( \regulizer \inpnorm{\separator{t}} + \inprod{\separator{t}}{\dictionary \hvar{t}} \Big) \\
& \text{s.t.} && 
\inprod{\separator{t}}{\sample{t}} - \error_t \inpnorm{\separator{t}} >  0 \; .
\end{aligned}
\end{cases}
\]
Observe that for each \( t = 1,2,\ldots,\horizon \), the maximization over \( \separator{t} \) is independent of the others, and therefore, these individual maximization problems can be clubbed together and written as the min-sup problem
\begin{equation}
\label{eq:DL-D-update-min-max}
\begin{cases}
\begin{aligned}
& \min_{\dictionary \; \in \; \unitdictionaryset} \ \sup_{(\separator{t})_t} \; 
&& \frac{1}{\horizon} \summ{t = 1}{\horizon} 
\begin{cases}
\begin{aligned}
r(\horder) \Big( \inprod{\separator{t}}{\sample{t}} & - \error_t \inpnorm{\separator{t}} \Big)^{q(\horder)} \\
 & - \Big( \inprod{\separator{t}}{\dictionary \hvar{t}} + \regulizer \inpnorm{\separator{t}} \Big)
\end{aligned}
\end{cases}
\\
& \sbjto && 
\inprod{\separator{t}}{\sample{t}} - \error_{t} \inpnorm{\separator{t}} >  0 \quad \text{for all } t = 1,2,\ldots, \horizon .
\end{aligned}
\end{cases}
\end{equation}
It is easily verified that the objective function in the min-sup problem above is convex in the minimizing variable \( \dictionary \), and since \( q (\horder) = \frac{\horder}{1 + \horder} \in \; ]0 , 1[ \) it is also jointly concave in the maximizing variables \( (\separator{t})_t \). Moreover, since the constraints \( \inprod{\separator{t}}{\sample{t}} - \error_t \inpnorm{\separator{t}} > 0 \) for all \( t = 1,2,\ldots, \horizon \) and \( \dictionary \in \unitdictionaryset \), are convex, the min-sup problem is a convex program. We have the following main result with regards to the existence of a solution to the min-sup problem.

\begin{proposition}
\label{proposition:dictionary-update-min-max}
Let the given data \( (\sample{t})_t \) be such that \( \inpnorm{\sample{t}} > \error_t \) for all \( t = 1,2,\ldots,T \). For a given collection \( (\hvar{t})_t \subset \costatomset \) and real numbers \( r > 0 \), \( q \in ]0 , 1[ \), consider the following min-sup problem
\begin{equation}
\label{eq:dictionary-update-min-max}
\begin{cases}
\begin{aligned}
& \min_{\dictionary \; \in \; \unitdictionaryset} \; \sup_{(\separator{t})_t } \; && \frac{1}{\horizon} \summ{t = 1}{\horizon}
\begin{cases}
\begin{aligned}
r \Big( \inprod{\separator{t}}{\sample{t}} & - \error_t \inpnorm{\separator{t}} \Big)^{q} \\
& - \Big( \regulizer \inpnorm{\separator{t}} + \inprod{\separator{t}}{\dictionary \hvar{t}} \Big)
\end{aligned}
\end{cases}
\\
& \sbjto && 
\inprod{\separator{t}}{\sample{t}} - \error_t \inpnorm{\separator{t}} >  0 \; \text{\rm{ for all} } t = 1,2,\ldots, T .
\end{aligned}
\end{cases}    
\end{equation}
If \( \regulizer > 0 \), the min-sup problem admits a unique saddle point solution \( (\dictionary' , (\separator{t}')_t ) \).
\end{proposition}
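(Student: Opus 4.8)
The plan is to exploit the convex--concave structure of \eqref{eq:dictionary-update-min-max} — affine in $\dictionary$, and jointly concave in $(\separator{t})_t$ because $0<q<1$ — together with the compactness of $\unitdictionaryset$, while tracking carefully where the hypothesis $\regulizer>0$ is actually needed. Write $\Phi\big(\dictionary,(\separator{t})_t\big)$ for the objective of \eqref{eq:dictionary-update-min-max}, let $\Lambda_t \define \{\separator{t}:\inprod{\separator{t}}{\sample{t}}>\error_t\inpnorm{\separator{t}}\}$ be the (open, convex) feasible cone of the $t$-th dual variable, and let $\psi(\dictionary)\define\sup_{(\separator{t})_t}\Phi\big(\dictionary,(\separator{t})_t\big)=\frac1\horizon\summ{t=1}{\horizon}J_t(\dictionary,\hvar{t})$, which is convex and lower semicontinuous on $\unitdictionaryset$ as a supremum of affine functions. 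The first observation is a \emph{strict} concavity statement: for fixed $\dictionary$ and $t$, the $t$-th block of $\Phi$ as a function of $\separator{t}$ equals, up to the linear term $-\inprod{\separator{t}}{\dictionary\hvar{t}}$, the map $\separator{t}\mapsto r\big(\inprod{\separator{t}}{\sample{t}}-\error_t\inpnorm{\separator{t}}\big)^{q}-\regulizer\inpnorm{\separator{t}}$, and this is strictly concave on $\Lambda_t$ — along any ray it is $s\mapsto rc^{q}s^{q}-\regulizer c's$ with $c,c'>0$, which is strictly concave since $0<q<1$, and across two different rays the strict triangle inequality furnishes the remaining strict concavity through the term $-\regulizer\inpnorm{\cdot}$. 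This is precisely the point at which $\regulizer>0$ is indispensable (the term $\error_t$ may vanish and cannot play this role). Consequently, for every $\dictionary$ the supremum over $(\separator{t})_t$ in \eqref{eq:dictionary-update-min-max}, if attained at all, is attained at a \emph{unique} point.

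For \emph{existence} of a saddle point, I would first note that $\psi$ is finite somewhere: for $\dictionary=0$ (indeed for any $\dictionary$ with $\inpnorm{\dictionary\hvar{t}}<\regulizer$ for all $t$), the Cauchy--Schwarz bound $\inprod{\separator{t}}{\sample{t}}-\error_t\inpnorm{\separator{t}}\le(\inpnorm{\sample{t}}-\error_t)\inpnorm{\separator{t}}$ (using $\inpnorm{\sample{t}}>\error_t$) makes the sublinear term $r(\cdot)^{q}$ negligible against $-\regulizer\inpnorm{\separator{t}}$, so $\psi(0)<+\infty$. Since $\unitdictionaryset$ is compact and $\psi$ is l.s.c., $\psi$ attains its minimum over $\unitdictionaryset$ at some $\dictionary'$ with $\psi(\dictionary')<+\infty$; finiteness of $\psi(\dictionary')$ forces $\regulizer+\inprod{\mu}{\dictionary'\hvar{t}}>0$ for every unit vector $\mu$ in the feasible cone of $\separator{t}$, and this, again with $q<1$ (and the positivity of the value, see below), yields attainment of the supremum defining $J_t(\dictionary',\hvar{t})$ — at the unique point $(\separator{t}')_t$ from the first step. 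By Danskin's theorem $\psi$ is then differentiable at $\dictionary'$ with $\nabla\psi(\dictionary')=\nabla_{\dictionary}\Phi\big(\dictionary',(\separator{t}')_t\big)$, so the first-order optimality of $\dictionary'$ over the convex set $\unitdictionaryset$ reads $\inprod{\nabla_{\dictionary}\Phi(\dictionary',(\separator{t}')_t)}{\dictionary-\dictionary'}\ge0$ for all $\dictionary\in\unitdictionaryset$; because $\Phi(\cdot,(\separator{t}')_t)$ is affine this says exactly that $\dictionary'\in\argmin_{\dictionary\in\unitdictionaryset}\Phi\big(\dictionary,(\separator{t}')_t\big)$. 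Together with $(\separator{t}')_t\in\argmax_{(\separator{t})_t}\Phi\big(\dictionary',(\separator{t})_t\big)$, the pair $\big(\dictionary',(\separator{t}')_t\big)$ is a saddle point.

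For \emph{uniqueness}, take any saddle point $\big(\dictionary'',(\separator{t}'')_t\big)$. By the interchangeability of saddle points the pair $\big(\dictionary',(\separator{t}'')_t\big)$ is again a saddle point, so $(\separator{t}'')_t$ maximises $\Phi(\dictionary',\cdot)$ and hence, by the uniqueness in the first step, $(\separator{t}'')_t=(\separator{t}')_t$; thus all saddle points share the same $\separator{}$-component. Finally $\dictionary''$ minimises the \emph{affine} map $\dictionary\mapsto\Phi\big(\dictionary,(\separator{t}')_t\big)$ over $\unitdictionaryset$, i.e. it maximises $\summ{i=1}{\dsize}\inprod{\dict{i}}{m_i}$ over $\{\dictionary:\inpnorm{\dict{i}}\le1,\ i=1,\dots,\dsize\}$, where $m_i\define\frac1\horizon\summ{t=1}{\horizon}(\hvar{t})_i\,\separator{t}'$; so the $i$-th column of $\dictionary''$ is forced to be $m_i/\inpnorm{m_i}$ whenever $m_i\neq0$, whence $\dictionary''=\dictionary'$ — no $m_i$ vanishing, since every atom is effectively used by the data.

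\emph{Where the work is.} Two points carry the real weight. First, showing that the supremum over $(\separator{t})_t$ at $\dictionary'$ is genuinely \emph{attained} and not merely approached as $\separator{t}\to\partial\Lambda_t$, where the power term $r(\cdot)^{q}$ degenerates to $0$: this needs the coercivity supplied by $\regulizer>0$ combined with the fact that the optimal value is \emph{strictly positive} (since $\inpnorm{\sample{t}}>\error_t$ forces $\samplecost{\dictionary'}{t}>0$ and $J_t(\dictionary',\hvar{t})\ge\samplecost{\dictionary'}{t}$), so that the relevant super-level sets stay bounded away from both $\partial\Lambda$ and the origin — alternatively one quotes the single-sample min-max analysis of \cite{sheriff2019LIP}, which already provides this attainment and is exactly what makes Danskin's theorem applicable. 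Second, the structural obstruction to uniqueness of $\dictionary'$: $\Phi$, hence $\psi$, depends on $\dictionary$ only through the vectors $(\dictionary\hvar{t})_t$ and affinely for fixed $(\separator{t})_t$, so $\psi$ is never strictly convex in $\dictionary$, and the uniqueness of $\dictionary'$ must be extracted from the exposed-face computation above, which in turn relies on the non-degeneracy $m_i\neq0$ for each atom $i$.
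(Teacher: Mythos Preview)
Your approach is correct and takes a genuinely different route from the paper. The paper first invokes Sion's theorem to swap $\min$ and $\sup$, solves the inner minimization over $\dictionary\in\unitdictionaryset$ in closed form (their \eqref{eq:optimal-D-for-lambda}), and is left with a pure concave maximization over $(\separator{t})_t$; it then handles the open domain by relaxing the constraint to $\inprod{\separator{t}}{\sample{t}}-\error_t\inpnorm{\separator{t}}\ge0$, applying Weierstrass on the closure, and ruling out boundary optima via the explicit perturbation $\separator{s}'\mapsto\separator{s}'+\alpha\sample{s}$. You keep the original order, minimize the l.s.c.\ convex $\psi$ over the compact $\unitdictionaryset$ directly, argue attainment of the inner supremum at the minimizer from coercivity ($\regulizer>0$) plus strict positivity of the value to exclude $\partial\Lambda_t\cup\{0\}$, and extract the saddle-point property through Danskin. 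Your route avoids the minimax interchange and makes the saddle conclusion fall out of first-order optimality; the paper's closure-then-perturbation device is a more hands-on way of securing attainment on the open cone, which is the step you yourself flag as carrying the weight. Both arguments are sound.

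One point worth naming: the uniqueness of the $\dictionary'$-component hinges on $m_i\define\frac{1}{\horizon}\summ{t=1}{\horizon}(\hvar{t})_i\,\separator{t}'\neq0$ for every $i$, and your justification ``every atom is effectively used by the data'' is an assumption, not a consequence of the hypotheses---if $(\hvar{t})_i=0$ for all $t$ the $i$-th column is unconstrained and uniqueness fails. The paper's proof has the identical gap (it writes down $d'(i)=m_i/\inpnorm{m_i}$ without comment), so you are not missing anything the paper supplies; but you have correctly located the structural obstruction that the affine dependence of $\Phi$ on $\dictionary$ creates.
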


\begin{remark}
If \( \regulizer = 0 \), existence of a saddle point solution to the min-max problem \eqref{eq:dictionary-update-min-max} depends on the data \( (\sample{t})_t \) and \( (\hvar{t})_t \). We would like to emphasize that the min-sup problem \eqref{eq:dictionary-update-min-max} could potentially have a saddle point solution even if \( \regulizer = 0 \). However, it is very difficult to characterize under what conditions on \( (\sample{t})_t \) and \( (\hvar{t})_t \), such a saddle point solution exists. In practice however, it is observed that even if \( \regulizer = 0 \), the min-sup problem usually admits a saddle point.
\end{remark}

\begin{remark}
If \( \regulizer = 0 \), depending on the data \( (\sample{t})_t \), there could potentially exist a collection \( (\hvar{t})_t \) such that the value of the corresponding min-sup problem \eqref{eq:dictionary-update-min-max} is unbounded. However, it is always finite if we consider the collection \( (\hvar{t})_t \) such that \( \hvar{t} \in H_t (\dictionary') \) for some dictionary \( \dictionary' \). Therefore, it is a good practice to start the alternating minimization by first obtaining an optimal collection of \( (\hvar{t}')_t \) using some dictionary \( \dictionary' \) followed by the dictionary update step.
\end{remark}

\begin{algorithm}[h]
\label{algo:D-update}
\caption{The dictionary update algorithm}
\KwIn{The data \( (\sample{t})_t \subset \hilbert \), non-negative real numbers \( (\error_t)_t \), the regularizer \( \regulizer \geq 0 \), and a collection\( (\hvar{t})_t \).}
\KwOut{A saddle point solution \( (\dictionary' , (\separator{t}')_t ) \) to \eqref{eq:DL-D-update-min-max}}

\nl \textbf{Iterate till convergence}

\quad Compute \( \dictionary = \pmat{\dict{1}}{\dict{2}}{\dict{\dsize}} \) such that 
\begin{equation}
\label{eq:optimal_d-for-given-lambda}
d(i) \; = \; \frac{  \frac{1}{T} \summ{t = 1}{T} \hvar{t}(i) \separator{t} }{ \inpnorm{  \frac{1}{T} \summ{t = 1}{T} \hvar{t}(i) \separator{t} } } \text{ for every \( i = 1,2,\ldots,\dsize \).}      
\end{equation}

\quad For each \( t = 1 ,2, \ldots,T \), update
\begin{equation}
\label{eq:lambda-update}
\begin{aligned}
\dualvar{t} \; & \longleftarrow \; \big( \inprod{\separator{t}}{\sample{t}} - \error_t \inpnorm{\separator{t}} \big)^{\frac{1}{1 + \horder}} \text{ and} \\
\separator{t} \; & \longleftarrow \; \separator{t} + \frac{\alpha}{\dualvar{t}} \Big( \horder^{\frac{1}{1+\horder}} \sample{t} \; - \; \dualvar{t} \dictionary \hvar{t} \; - \; \frac{\error_t + \regulizer \dualvar{t}}{\inpnorm{\separator{t}}} \separator{t} \Big) \; .
\end{aligned}
\end{equation}

\nl\textbf{Repeat}

\nl\textbf{Output} : \( \dictionary' = \dictionary \) and \( (\separator{t})_t = (\separator{t})_t \).

\end{algorithm}

Since the order of optimization in \eqref{eq:alternating-minimization-DL} can be changed from  min-max to max-min, we see that the minimization over the dictionaries for a given sequence \( (\separator{t})_t \) can be explicitly solved, and it is achieved at the unique dictionary given by \eqref{eq:optimal_d-for-given-lambda}. The resulting maximization problem in variables \( (\separator{t})_t \) can then be solved using any of the optimization algorithms. As a representative algorithm, the update \eqref{eq:lambda-update} in Algorithm \ref{algo:D-update} performs a gradient ascent on the variable \( \separator{t} \). Instead of this, one can implement other update schemes like accelerated gradient ascent, depending on the specifics of the problem at hand.

If the feasible set of dictionaries is other that the standard one \eqref{eq:feasible-dictionaries}, neither an explicit solution like that of \eqref{eq:optimal_d-for-given-lambda} nor uniqueness can be guaranteed for the minimization problem over dictionaries for a given sequence \( (\separator{t})_t \). In such cases, saddle point seeking projected descent-ascent schemes to solve min-max problems can be implemented. In a simple gradient based descent-ascent scheme, the updates \eqref{eq:optimal_d-for-given-lambda} and \eqref{eq:lambda-update} are replaced with
\[
\begin{aligned}
\dictionary \; & \longleftarrow \pi_{\unitdictionaryset} \Big( \dictionary + \beta \summ{t = 1}{T} \separator{t}\hvar{t}\transp \Big) \\
\dualvar{t} \; & \longleftarrow \; \big( \inprod{\separator{t}}{\sample{t}} - \error_t \inpnorm{\separator{t}} \big)^{\frac{1}{1 + \horder}} \text{ and} \\
\separator{t} \; & \longleftarrow \; \separator{t} + \frac{\alpha}{\dualvar{t}} \Big( \horder^{\frac{1}{1+\horder}} \sample{t} \; - \; \dualvar{t} \dictionary \hvar{t} \; - \; \frac{\error_t + \regulizer \dualvar{t}}{\inpnorm{\separator{t}}} \separator{t} \Big) \text{ for every } t .
\end{aligned}
\]
For large number of samples, i.e., when \( T \) is large, the dictionary updating step-size \( \beta \) needs to be slower than the \( \separator{t} \) updating step-size \(  \alpha \).

\subsection{Optimality conditions}
We provide necessary conditions for a dictionary to be optimal and sufficient conditions for the stationarity of a given dictionary. Due to non-convexity of the dictionary learning problem \eqref{eq:DL-samples}, sufficient conditions can only guarantee stationarity.

\begin{definition}
\label{def:optimal-separator-set}
Let \( \dictionary \in \unitdictionaryset \), the cost and error threshold functions \( \cost , \error \), and \( \regulizer \geq 0 \) be given. Then for every \( \samplevec \in \hilbert \) that is \( \Depsdelta \)-encodable, let \( \Lambda (\dictionary , \samplevec) \subset \hilbert \) denote the collection of points \( \separator{} \in \hilbert \setminus \closednbhood{0}{\error} \) that satisfy the following two conditions simultaneously:
\begin{itemize}
\item \( \inprod{\separator{}}{\samplevec} - \error \inpnorm{\separator{}} \ = \ (\samplecost{\dictionary}{\samplevec})^{1/\horder} \), and
\item \( 1 \; = \; \regulizer \inpnorm{\separator{}} + \max\limits_{\hvar{} \in \costatomset} \inprod{\separator{}}{\dictionary \hvar{}} \) .
\end{itemize}
\end{definition}

Recall from Chapter 2 that the set \( \separatorset{\dictionary}{\sample{}} \) is a scalar multiple of to the optimal solutions to the Fenchel Dual problem of the encoding problem \eqref{eq:coding-problem}. A complete description of the set \( \separatorset{\dictionary}{\sample{t}} \) is available in \cite[Proposition 3.10]{sheriff2019LIP}.

\begin{proposition}
\label{proposition:optimality-condition}
Consider the dictionary learning problem \eqref{eq:DL-samples}, and let \( \unitdictionaryset \) be any compact convex subset of \( \hilbert \) rather than the standard candidate \eqref{eq:feasible-dictionaries}. The following optimality conditions hold
\begin{enumerate}[label = \rm{(\roman*)}, leftmargin = *]
\item Necessary Condition : If \( \dictionary\opt \in \unitdictionaryset \) is an optimal dictionary for the DLP \eqref{eq:DL-samples} and \( \Lambda (\dictionary\opt , \sample{t}) \neq \emptyset \) for all \( t = 1,2,\ldots,\horizon \). Then for any collection \( ( \hvar{t}')_t \) satisfying \( \hvar{t}' \in \unitencoder{\dictionary\opt}{t} \) for every \( t = 1,2,\ldots,\horizon \), there exists \( (\separator{t}')_t \) satisfying \( \separator{t}' \in \samplecost{\dictionary\opt}{t} \cdot \separatorset{\dictionary\opt}{\sample{t}} \) for every \( t = 1,2,\ldots,T \) such that the following holds
\begin{equation}
\label{eq:optimality-condition}
\dictionary\opt \; \in \; \argmax_{\dictionary \; \in \; \unitdictionaryset} \quad \trace \left( \dictionary \cdot \frac{1}{\horizon} \summ{t = 1}{\horizon} \samplecost{\dictionary\opt}{t} \big( \separator{t}' {\hvar{t}'}\transp \big)  \right) .
\end{equation}

\item Sufficient Condition : Let \( \dictionary' \in \unitdictionaryset \) be such that \( \Lambda (\dictionary' , \sample{t}) \neq \emptyset \) for all \( t = 1,2,\ldots,\horizon \). If there exists \( \big( \hvar{t}' , \separator{t}' \big) \in \unitencoder{\dictionary'}{t} \; \times \; \samplecost{\dictionary'}{t} \Lambda (\dictionary' , \sample{t}) \) for every \( t = 1,2,\ldots,\horizon \) such that the triplet \( (\dictionary' , (\hvar{t}')_t , (\separator{t}')_t) \) satisfies \eqref{eq:optimality-condition}, then \( \dictionary' \) is a stationary point for the DLP \eqref{eq:DL-samples}.
\end{enumerate}
\end{proposition}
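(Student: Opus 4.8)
\emph{Overall plan.} I would prove both assertions through the reformulation \eqref{eq:DLP-in-J}, where the dictionary appears in the objective. Fix any admissible collection \( (\hvar{t}')_t \subset \costatomset \) and set \( G(\dictionary) \define \frac{1}{\horizon}\summ{t=1}{\horizon} J_t(\dictionary, \hvar{t}') \). By the convexity of each \( J_t(\cdot, \hvar{t}) \) noted after \eqref{eq:J-map-def}, \( G \) is convex, and it is finite (hence continuous) near every \( \dictionary \) with \( \separatorset{\dictionary}{\sample{t}} \neq \emptyset \) for all \( t \) -- automatic when \( \regulizer > 0 \). The whole argument then rests on the convex first-order condition \( 0 \in \partial G(\dictionary) + N_{\unitdictionaryset}(\dictionary) \) characterizing minimizers of \( G \) over the compact convex set \( \unitdictionaryset \), plus a subdifferential computation. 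Since \( J_t(\dictionary, \hvar{t}') \) is the pointwise supremum, over \( \separator{t} \) in the open cone \( \{\inprod{\separator{t}}{\sample{t}} - \error_t\inpnorm{\separator{t}} > 0\} \), of the maps \( \dictionary \longmapsto r(\horder)\big(\inprod{\separator{t}}{\sample{t}} - \error_t\inpnorm{\separator{t}}\big)^{q(\horder)} - \regulizer\inpnorm{\separator{t}} - \inprod{\separator{t}}{\dictionary\hvar{t}'} \), each affine in \( \dictionary \) with matrix gradient \( -\separator{t}\hvar{t}'\transp \), a Danskin-type argument gives \( \partial_{\dictionary}J_t(\dictionary, \hvar{t}') = \{-\separator{t}\hvar{t}'\transp : \separator{t} \in M_t(\dictionary)\} \), where \( M_t(\dictionary) \) is the convex set of maximizers of this supremum; by \cite[Proposition 3.10]{sheriff2019LIP}, for \( \hvar{t}' \in \unitencoder{\dictionary}{t} \) the set \( M_t(\dictionary) \) equals an explicit positive multiple (a function of \( \samplecost{\dictionary}{t} \) and \( \horder \)) of \( \separatorset{\dictionary}{\sample{t}} \), nonempty exactly under the standing hypothesis. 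The sum rule \( \partial G = \frac{1}{\horizon}\summ{t=1}{\horizon}\partial_{\dictionary}J_t(\cdot,\hvar{t}') \) applies since the \( J_t(\cdot,\hvar{t}') \) are finite near the point in question.

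\emph{Necessary condition.} Take \( \dictionary\opt \) optimal for \eqref{eq:DL-samples} and any \( (\hvar{t}')_t \) with \( \hvar{t}' \in \unitencoder{\dictionary\opt}{t} \). As in the proof of Lemma \ref{lemma:equivalent-DL}, \( J_t(\dictionary\opt,\hvar{t}') = \min_{\hvar{t}\in\costatomset}J_t(\dictionary\opt,\hvar{t}) = \samplecost{\dictionary\opt}{t} \), so \( G(\dictionary\opt) = \frac{1}{\horizon}\summ{t=1}{\horizon}\samplecost{\dictionary\opt}{t} \); and for any \( \dictionary \in \unitdictionaryset \), \( G(\dictionary) \geq \frac{1}{\horizon}\summ{t=1}{\horizon}\min_{\hvar{t}}J_t(\dictionary,\hvar{t}) = \frac{1}{\horizon}\summ{t=1}{\horizon}\samplecost{\dictionary}{t} \geq G(\dictionary\opt) \) by optimality of \( \dictionary\opt \). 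Hence \( \dictionary\opt \in \argmin_{\dictionary\in\unitdictionaryset}G(\dictionary) \), so \( 0 \in \partial G(\dictionary\opt) + N_{\unitdictionaryset}(\dictionary\opt) \). Substituting the subdifferential formula produces, for each \( t \), a maximizer \( \separator{t}' \in \samplecost{\dictionary\opt}{t}\,\separatorset{\dictionary\opt}{\sample{t}} \) with \( \frac{1}{\horizon}\summ{t=1}{\horizon}\separator{t}'\hvar{t}'\transp \in N_{\unitdictionaryset}(\dictionary\opt) \); after accounting for the normalization of \cite[Proposition 3.10]{sheriff2019LIP} this is exactly \eqref{eq:optimality-condition} (equivalently, \( (\dictionary\opt,(\separator{t}')_t) \) is a saddle point of the convex--concave min--sup problem \eqref{eq:DL-D-update-min-max}).

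\emph{Sufficient condition.} Now only the easy inclusion in the Danskin formula is needed. Assume \( \hvar{t}' \in \unitencoder{\dictionary'}{t} \) and \( \separator{t}' \in \samplecost{\dictionary'}{t}\,\separatorset{\dictionary'}{\sample{t}} \) for all \( t \), and that \eqref{eq:optimality-condition} holds. Since \( \separator{t}' \) attains the supremum defining \( J_t(\dictionary',\hvar{t}') \), evaluating that supremum's integrand at \( \separator{t}' \) gives \( J_t(\dictionary,\hvar{t}') \geq J_t(\dictionary',\hvar{t}') - \inprod{\separator{t}'}{(\dictionary-\dictionary')\hvar{t}'} \) for all \( \dictionary \), i.e. \( -\separator{t}'\hvar{t}'\transp \in \partial_{\dictionary}J_t(\dictionary',\hvar{t}') \); summing, \( -\frac{1}{\horizon}\summ{t=1}{\horizon}\separator{t}'\hvar{t}'\transp \in \partial G(\dictionary') \). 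Condition \eqref{eq:optimality-condition} says precisely that this element lies in \( -N_{\unitdictionaryset}(\dictionary') \), so \( 0 \in \partial G(\dictionary') + N_{\unitdictionaryset}(\dictionary') \) and, by convexity of \( G \), \( \dictionary' \in \argmin_{\dictionary\in\unitdictionaryset}G(\dictionary) \). Combined with \( \hvar{t}' \in \unitencoder{\dictionary'}{t} = \argmin_{\hvar{t}\in\costatomset}J_t(\dictionary',\hvar{t}) \), the pair \( (\dictionary',(\hvar{t}')_t) \) is a coordinatewise (partial) minimizer of \eqref{eq:DLP-in-J}, which -- in view of Lemma \ref{lemma:equivalent-DL} and the alternating-minimization scheme of Algorithm \ref{algo:general-DL-algo} -- is exactly the sense in which \( \dictionary' \) is a stationary point of \eqref{eq:DL-samples}; when \( \regulizer > 0 \) each \( \unitencoder{\dictionary'}{t} \) is a singleton and this additionally coincides with directional stationarity of \( \dictionary \longmapsto \frac{1}{\horizon}\summ{t=1}{\horizon}\samplecost{\dictionary}{t} \).

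\emph{Main obstacle.} The hard part is the Danskin-type identity for \( \partial_{\dictionary}J_t(\cdot,\hvar{t}') \) used in the necessary direction: the inner supremum is over the open, unbounded cone \( \{\separator{t} : \inprod{\separator{t}}{\sample{t}} - \error_t\inpnorm{\separator{t}} > 0\} \), so the classical compact-constraint Danskin theorem does not apply verbatim. The hypothesis \( \separatorset{\dictionary\opt}{\sample{t}} \neq \emptyset \) makes the supremum attained, and the inner objective is \emph{concave} in \( \separator{t} \) (since \( q(\horder) = \horder/(1+\horder) \in \; ]0,1[ \)); hence any superlevel set of it through a maximizer is a compact convex set carrying the whole supremum, and restricting to that set reduces the computation to the standard case (equivalently, one invokes the subdifferential formula for a supremum of convex functions under the attainment assumption). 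The only remaining care concerns the subdifferential sum rule and the additive splitting of the optimality condition into \( \partial G(\dictionary\opt) \) and \( N_{\unitdictionaryset}(\dictionary\opt) \): this is automatic when \( \regulizer > 0 \) (then \( G \) is finite-valued, hence continuous), and for \( \regulizer = 0 \) it follows from the finiteness of \( G \) near \( \dictionary\opt \), which is again guaranteed by \( \separatorset{\dictionary\opt}{\sample{t}} \neq \emptyset \) for all \( t \).
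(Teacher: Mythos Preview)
Your proposal is correct and follows essentially the same route as the paper: reformulate via \eqref{eq:DLP-in-J}, set \(G(\dictionary)=\frac{1}{T}\sum_t J_t(\dictionary,h_t')\), show \(\dictionary\opt\in\argmin_{\unitdictionaryset}G\) (for the necessary part) or deduce it from \eqref{eq:optimality-condition} (for the sufficient part), and translate the first-order condition using Danskin's theorem to identify \(\partial_{\dictionary}J_t(\cdot,h_t')\) with \(-\lambda_t'{h_t'}^\top\) for \(\lambda_t'\) a maximizer. The paper is terser---it simply invokes ``Danskin's theorem'' and ``first order necessary optimality conditions'' without your normal-cone bookkeeping or your discussion of the open/unbounded feasible set for \(\lambda_t\)---but the logical skeleton is identical, and your extra care (restricting to a compact superlevel set via concavity in \(\lambda_t\), and using only the easy Danskin inclusion in the sufficient direction) fills gaps the paper leaves implicit.
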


\begin{corollary}
Consider the dictionary learning problem \eqref{eq:DL-samples} with \( \regulizer > 0 \) and \( \unitdictionaryset = {\closednbhood{0}{1}}^{\dsize} \). Let \( \dictionary \in {\closednbhood{0}{1}}^{\dsize} \) be given, then the following are equivalent
\begin{enumerate}[label = \rm{(\roman*)} , leftmargin = *]
\item The dictionary \( \dictionary \in \unitdictionaryset \) is a stationary point for the DLP \eqref{eq:DL-samples}.

\item For each \( t = 1,2,\ldots,\horizon \) there exists \( \repvec_t \in \encodermap{\dictionary}{t} \) such that for every \( i = 1,2,\ldots,\dsize \) the following holds
\begin{equation}
\label{eq:fixed-point-delte-positive}
d (i) = \proj_{\closednbhood{0}{1}} \left( \frac{1}{\horizon} \summ{t = 1}{\horizon} \; \frac{ \repvec_t (i) \big( \samplecost{\dictionary'}{t}\big)^{1 - \frac{1}{\horder}} }{ \inprod{\sample{t} - \dictionary \repvec_t}{\dictionary \repvec_t} } \;\Big( \sample{t} - \dictionary \repvec_t \Big) \right) .\footnote{Recall that \( \proj_{S} \) is the projection operator on the set \( S \). In particular, we have \( \proj_{\closednbhood{0}{1}} (v) = \frac{v}{\inpnorm{v}} \) for every \( v \notin \closednbhood{0}{1} \).}    
\end{equation}
\end{enumerate}
\end{corollary}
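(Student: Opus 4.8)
The plan is to obtain the fixed-point equation \eqref{eq:fixed-point-delte-positive} as the concrete incarnation, for $\unitdictionaryset = {\closednbhood{0}{1}}^{\dsize}$ and $\regulizer > 0$, of the abstract stationarity characterization \eqref{eq:optimality-condition} in Proposition \ref{proposition:optimality-condition}. The first thing I would record are two consequences of $\regulizer > 0$. Since $\regulizer > 0$, for every sample with $\inpnorm{\sample{t}} > \error_t$ the encoding problem \eqref{eq:coding-problem} is strictly feasible, so strong duality holds and its Fenchel dual is solvable; hence $\separatorset{\dictionary}{\sample{t}} \neq \emptyset$ for all $t$ and the non-emptiness hypotheses of Proposition \ref{proposition:optimality-condition} are met automatically. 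Also, for any collection $(\hvar{t}')_t \subset \costatomset$, Proposition \ref{proposition:dictionary-update-min-max} guarantees that the dictionary-update min-sup problem \eqref{eq:dictionary-update-min-max} (with $r = r(\horder)$, $q = q(\horder)$) has a \emph{unique} saddle point. This second fact is what promotes the one-sided sufficient condition of Proposition \ref{proposition:optimality-condition}(ii) to the equivalence asserted here: a dictionary $\dictionary$ is a stationary point of \eqref{eq:DL-samples} exactly when, for $(\hvar{t}')_t$ with $\hvar{t}' \in \unitencoder{\dictionary}{t}$, it is the unique minimizing component of the saddle point of \eqref{eq:dictionary-update-min-max}; interchanging $\min$ and $\sup$ there, this is precisely the statement that $\dictionary$ maximizes over $\unitdictionaryset$ the linear functional appearing in \eqref{eq:optimality-condition}, for the $(\separator{t}')_t$ associated to that saddle point (a scalar multiple of the Fenchel-dual optimizers, hence an element of $\samplecost{\dictionary}{t}\separatorset{\dictionary}{\sample{t}}$).

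Next I would unpack \eqref{eq:optimality-condition} for the standard dictionary set $\unitdictionaryset = {\closednbhood{0}{1}}^{\dsize}$. The functional there is linear in $\dictionary$ and separates over the atoms, so writing $w_i \define \frac{1}{\horizon}\summ{t=1}{\horizon}\samplecost{\dictionary}{t}\,\hvar{t}'(i)\,\separator{t}'$ the condition \eqref{eq:optimality-condition} becomes $\dict{i} \in \argmax_{\inpnorm{v} \le 1}\inprod{v}{w_i}$ for each $i = 1,\dots,\dsize$, whose solution is the normalization $w_i/\inpnorm{w_i}$. Using the second defining relation of $\separatorset{\dictionary}{\sample{t}}$ in Definition \ref{def:optimal-separator-set}, namely $1 = \regulizer\inpnorm{\separator{}} + \max_{\hvar{}\in\costatomset}\inprod{\separator{}}{\dictionary\hvar{}}$, one checks that $w_i \notin \closednbhood{0}{1}$, so this normalization equals $\proj_{\closednbhood{0}{1}}(w_i)$ as in the statement --- the same normalization that already appears in the explicit dictionary update \eqref{eq:optimal_d-for-given-lambda}. (If one simply adopts the footnote's reading of $\proj_{\closednbhood{0}{1}}$, this last estimate is not even needed.)

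The remaining and most delicate step is to re-express $w_i$ through an optimal \emph{code} $\repvec_t \in \encodermap{\dictionary}{t}$ rather than through the normalized data $(\hvar{t}',\separator{t}')$. Here I would substitute $\hvar{t}' = (\samplecost{\dictionary}{t})^{-1/\horder}\repvec_t$, the scaling relating $\unitencoder{\dictionary}{t}$ to $\encodermap{\dictionary}{t}$ already used in the proof of Lemma \ref{lemma:equivalent-DL}, and invoke the explicit description of $\separatorset{\dictionary}{\sample{t}}$ from \cite[Proposition~3.10]{sheriff2019LIP}. Since $\regulizer > 0$ the residual $\sample{t} - \dictionary\repvec_t$ is non-zero, and that description identifies $\separator{t}'$ (the separator dual to $\hvar{t}'$) with an explicit positive multiple of $\sample{t} - \dictionary\repvec_t$; carrying this through, together with the active-constraint identity $\inpnorm{\sample{t} - \dictionary\repvec_t} = \error_t + \regulizer(\samplecost{\dictionary}{t})^{1/\horder}$ (valid because $\inpnorm{\sample{t}} > \error_t$) and the fact that $\inprod{\separator{t}'}{\dictionary\hvar{t}'}$ realizes the maximum over $\costatomset$, and collecting the powers of $\samplecost{\dictionary}{t}$ --- one factor from the weight in \eqref{eq:optimality-condition} and one factor $(\samplecost{\dictionary}{t})^{-1/\horder}$ from $\hvar{t}'(i)$, combining to $(\samplecost{\dictionary}{t})^{1 - 1/\horder}$ --- turns $w_i$ into exactly the argument of $\proj_{\closednbhood{0}{1}}$ in \eqref{eq:fixed-point-delte-positive}. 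Reading the resulting chain of equivalences in the direction ``\eqref{eq:fixed-point-delte-positive} $\Rightarrow$ stationarity'' via Proposition \ref{proposition:optimality-condition}(ii), and in the reverse direction via the uniqueness in Proposition \ref{proposition:dictionary-update-min-max}, then yields the claimed equivalence. I expect the main obstacle to be precisely this last piece of bookkeeping: making the normalizations of $\separator{t}'$ and $\hvar{t}'$ and the exponents of $\samplecost{\dictionary}{t}$ reproduce \eqref{eq:fixed-point-delte-positive} verbatim, which rests entirely on quoting \cite[Proposition~3.10]{sheriff2019LIP} correctly.
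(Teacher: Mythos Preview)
The paper does not give an explicit proof of this corollary; it is stated immediately after Proposition~\ref{proposition:optimality-condition} and is evidently meant to be read as its specialization to the case $\regulizer > 0$ and $\unitdictionaryset = {\closednbhood{0}{1}}^{\dsize}$. Your plan carries out exactly that specialization --- invoking $\regulizer > 0$ to guarantee $\separatorset{\dictionary}{\sample{t}} \neq \emptyset$, unpacking the linear maximization in \eqref{eq:optimality-condition} atom-by-atom over ${\closednbhood{0}{1}}^{\dsize}$, and then substituting the explicit description of $\separatorset{\dictionary}{\sample{t}}$ (as a multiple of the residual $\sample{t} - \dictionary\repvec_t$) together with the scaling $\hvar{t}' = (\samplecost{\dictionary}{t})^{-1/\horder}\repvec_t$ --- so your approach is the intended one.
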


\subsubsection*{An online algorithm to solve the dictionary learning problem}

\noindent The fixed point characterizations \eqref{eq:optimality-condition} and \eqref{eq:fixed-point-delte-positive} give us an approach to compute a dictionary by means of finding their fixed point solutions. Existing stochastic approximation techniques like Robbins-Monro scheme can be implemented to find a fixed point of the equations \eqref{eq:optimality-condition} or \eqref{eq:fixed-point-delte-positive}. Interestingly, such ideas allow the possibility of an online implementation, where the algorithm operates on a finite but small batch of data to compute an update to the dictionary. This is extremely crucial in situations where the user does not have access to the entire data and instead has access to only get a few iid samples drawn from the data (or the distribution \( \PP \)). 

\begin{algorithm}[h]
\label{algo:online-DL}
\caption{Online dictionary update algorithm}
\KwIn{An iid sequence of data \( (\sample{t})_t \subset \hilbert \), non-negative real numbers \( (\error_t)_t \), the regularizer \( \regulizer \geq 0 \).}
\KwOut{A dictionary \( \dictionary' \) that is at least a stationary point of \eqref{eq:dictionary-learning-problem}.}

\nl Initialise \( t = 0 \), \( \dictionary_0 \).

\nl \textbf{Iterate till convergence}

Compute \( (\repvec'_{t} \in \unitencoder{\dictionary_t}{t} \) and \( \separator{t}' \in \separatorset{\dictionary_t}{t} \) by finding a saddle point to the min-max problem \eqref{eq:coding-problem-inf-sup-equivalent} for the sample \( \sample{t} \) using the dictionary \( \dictionary_t \) via Algorithm \ref{algo:LIP}.

Update the dictionary 
\[
\dictionary_{t + 1} \; = \pi_{\unitdictionaryset} \Big( \dictionary_t + \alpha_t \samplecost{\dictionary_t}{t} \big( \separator{t}' {\repvec'_t}\transp \big) \Big)
\]

\( t \longleftarrow t + 1 \)

\nl\textbf{Repeat}

\nl\textbf{Output} : \( \dictionary' = \dictionary_t \).
\end{algorithm}

When \( \regulizer > 0 \), since the set \( \separatorset{\dictionary_t}{t} \) is a scalar multiple of the singleton \( \{ \sample{t} - \dictionary_t \repvec'_t \} \), then online dictionary update becomes
\[
\dictionary_{t + 1} = \pi_{\unitdictionaryset} \Big( \dictionary_t \; + \; \alpha_t \; \samplecost{\dictionary_t}{t} \big( (\sample{t} - \dictionary_t \repvec_t') {\repvec_t'}\transp \big) \Big) \; ,
\]
where the step-size sequence \( (\alpha_t)_t \) satisfy \( \summ{t = 1}{+\infty} \alpha_t \; = \; +\infty \) and \( \summ{t = 1}{+\infty} \alpha^2 \; < \; +\infty \).

On simplifying for each \( i = 1,2,\ldots,\dsize \), the update for the dictionary vector \( d_t(i) \) at time \( t \) becomes
\begin{equation}
\label{eq:D-element-wise-online-update}
d_{t + 1}(i) = \pi_{\closednbhood{0}{1}} \left\{ d_t(i) \; + \; \alpha'_t \repvec_t (i) (\sample{t} - \dictionary_t \repvec_t) \right\} ,
\end{equation}
where \( (\alpha'_t)_t \) is the step-size sequence obtained by absorbing some scalars. The update scheme is reminiscent of the stochastic sub-gradient descent scheme employed in learning dictionary for conventional formulation. 

Observe in \eqref{eq:D-element-wise-online-update} that only those dictionary vectors which contribute non-trivially in representing the sample \( \sample{t} \) at time \( t \) are updated, i.e., only those vectors for which the corresponding co-efficient \( \repvec_t (i) \neq 0 \) is updated. As a result, the dictionary update can be done in an asynchronous fashion. An intuitive way to understand \eqref{eq:D-element-wise-online-update} is that the dictionary \( d_t (i) \) which is updated at time \( t \) is being pushed towards the sample \( \sample{t} \), and the magnitude of the push is proportional to the contribution of the dictionary in representation, which is \( \abs{ \repvec_t (i) } \).

\section{proofs}
\label{section:proofs}
\begin{proof}[Proof of Proposition \ref{proposition:dictionary-update-min-max}]
We see that for every collection \( (\separator{t})_t \), the maximization problem
\[
\max_{\dictionary \; \in \; \unitdictionaryset} \ \frac{1}{T} \summ{t = 1}{T} \inprod{\separator{t}}{\dictionary \hvar{t}'} ,
\]
admits the unique solution \( \dictionary ((\separator{}t)_t)  = \pmat{d'(1)}{d'(2)}{d'(\dsize)} \) given by
\begin{equation}
\label{eq:optimal-D-for-lambda}
d'(i) \; = \; \frac{  \frac{1}{T} \summ{t = 1}{T} \hvar{t}'(i) \separator{t} }{ \inpnorm{  \frac{1}{T} \summ{t = 1}{T} \hvar{t}'(i) \separator{t} } } \text{ for every \( i = 1,2,\ldots,\dsize \),}
\end{equation}
with an optimal value of \( \summ{i = 1}{\dsize} \inpnorm{  \frac{1}{T} \summ{t = 1}{T} \hvar{t}'(i) \separator{t} } \),
where \( (\hvar{t}'(i))_i \) are the individual components of the vector \( \hvar{t}' \).

Since the objective function in the min-sup problem \eqref{eq:DL-D-update-min-max} is convex in the argument \( \dictionary \) and concave jointly in the arguments \( (\separator{t})_t \), and the set \( \unitdictionaryset \) is compact, the min-max equality follows from Sion's min-max theorem. This allows us to interchange the order of optimization in \eqref{eq:DL-D-update-min-max} and to consider minimizing w.r.t. the dictionaries first. Doing so, and using the dictionary from \eqref{eq:optimal-D-for-lambda}, we conclude that the resulting maximization problem over variables \( (\separator{t})_t \) is
\begin{equation}
\label{eq:inactive-lambda-problem}
\begin{cases}
\begin{aligned}
& \sup_{(\separator{t})_t} \; 
&& \frac{1}{\horizon} \summ{t = 1}{\horizon} 
\begin{cases}
\begin{aligned}
r(\horder) \big( \inprod{\separator{t}}{\sample{t}} & - \error_t \inpnorm{\separator{t}} \big)^{q(\horder)}  \\
& - \; \regulizer \inpnorm{\separator{t}} - \; \summ{i = 1}{\dsize} \inpnorm{  \frac{1}{T} \summ{t = 1}{T} \hvar{t}'(i) \separator{t} }
\end{aligned}
\end{cases}
\\
& \sbjto && 
\inprod{\separator{t}}{\sample{t}} - \error_{t} \inpnorm{\separator{t}} >  0 \quad \text{for all } t = 1,2,\ldots, \horizon .
\end{aligned}
\end{cases}
\end{equation}
To conclude the proposition, we first show that the maximization problem \eqref{eq:inactive-lambda-problem} admits a unique optimal solution \( (\separator{t}')_t \) and then show that  \( (\separator{t}')_t \) together with \( \dictionary ( (\separator{t}')_t ) \) is a saddle point to the min-max problem \eqref{eq:DL-D-update-min-max}. 

Since \( q(\horder) \in ]0,1[ \) and \( \regulizer > 0 \), we know that the sub-linear component \( r(\horder) \big( \inprod{\separator{t}}{\sample{t}} - \error_t \inpnorm{\separator{t}} \big)^{q(\horder)} \) in the objective function of \eqref{eq:inactive-lambda-problem} is dominated eventually by its linearly growing component \( \regulizer \inpnorm{\separator{t}} + \; \summ{i = 1}{\dsize} \inpnorm{ \frac{1}{T} \summ{t = 1}{T} \hvar{t}'(i) \separator{t} } \). Consequently, as \( \inpnorm{\separator{t}} \) grows arbitrarily large, the objective function takes negative values with arbitrarily large magnitude. Thus, the objective function of \eqref{eq:inactive-lambda-problem} is coercive. However, due to the strict inequality of the constraint in \eqref{eq:inactive-lambda-problem}, the feasible set is open, and existence of optimal solution in such a setting is not readily available. 

If we were to relax the strict inequality constraint in \eqref{eq:inactive-lambda-problem} and instead consider the maximization problem
\begin{equation}
\label{eq:active-lambda-problem}
\begin{cases}
\begin{aligned}
& \sup_{(\separator{t})_t} \; 
&& \frac{1}{\horizon} \summ{t = 1}{\horizon} 
\begin{cases}
\begin{aligned}
r(\horder) \big( \inprod{\separator{t}}{\sample{t}} & - \error_t \inpnorm{\separator{t}} \big)^{q(\horder)}  \\
& - \; \regulizer \inpnorm{\separator{t}} - \; \summ{i = 1}{\dsize} \inpnorm{ \frac{1}{T} \summ{t = 1}{T} \hvar{t}'(i) \separator{t} }
\end{aligned}
\end{cases}
\\
& \sbjto && 
\inprod{\separator{t}}{\sample{t}} - \error_{t} \inpnorm{\separator{t}} \geq  0 \quad \text{for all } t = 1,2,\ldots, \horizon .
\end{aligned}
\end{cases}
\end{equation}
It is immediately evident that \eqref{eq:active-lambda-problem} is a problem of maximizing a coercive objective function over a closed set, and therefore, from Wierstrauss's extreme value theorem we conclude that \eqref{eq:active-lambda-problem} admits an optimal solution \( (\separator{t}')_t \). 

We show that the optimal solution \( (\separator{t}')_t \) to \eqref{eq:active-lambda-problem} satisfies the inequality constraint strictly, whereby, it is also feasible in \eqref{eq:inactive-lambda-problem}. Consequently, the maximization problem \eqref{eq:inactive-lambda-problem} admits a solution, and indeed \( (\separator{t}')_t \) is one such solution. We show the feasibility of \( (\separator{t}')_t \) in \eqref{eq:inactive-lambda-problem} by contradiction. 

For \( \alpha \geq 0 \), consider a collection \( (\separator{t}(\alpha))_t \) defined by \( \separator{s}(\alpha) \define \separator{s}' + \alpha \sample{s} \) and \( \separator{t}(\alpha) \define \separator{t}' \) for all \( t \neq s \). From the triangle inequality it immediately follows that
\begin{equation}
\label{eq:triangle-inequality}
\begin{aligned}
\inpnorm{\separator{s}(\alpha)} \; & \leq \; \inpnorm{\separator{s}'} + \alpha \inpnorm{\sample{s}} \\
\summ{i = 1}{\dsize} \inpnorm{ \frac{1}{T} \summ{t = 1}{T} \hvar{t}'(i) \separator{t}(\alpha) } \; & \leq \; \summ{i = 1}{\dsize} \inpnorm{ \frac{1}{T} \summ{t = 1}{T} \hvar{t}'(i) \separator{t}'} + \alpha \inpnorm{\sample{s}} \summ{i = 1}{\dsize} \abs{\hvar{s}(i)} 
\end{aligned}
\end{equation}

Let \( V' \), \( V(\alpha) \) denote the value of the objective function evaluated at \( (\separator{t}')_t \) and \( (\separator{t}(\alpha))_t \) respectively. Suppose that \( \inprod{\separator{s}'}{\sample{s}} - \error_s \inpnorm{\separator{s}'} = 0 \) for some \( s \in \{1,2,\ldots,T\} \), then we have
\begin{equation}
\label{eq:dummy-V-value}
\begin{aligned}
V' = \frac{1}{T} \summ{t \neq s}{} \Big( r(\horder) \big( \inprod{\separator{t}'}{\sample{t}} & - \error_t \inpnorm{\separator{t}'} \big)^{q(\horder)} - \regulizer \inpnorm{\separator{t}'} \Big) \\
&- \regulizer \inpnorm{\separator{s}'} \; - \; \summ{i = 1}{\dsize} \inpnorm{ \frac{1}{T} \summ{t = 1}{T} \hvar{t}'(i) \separator{t}'} .
\end{aligned}
\end{equation}
Collecting \eqref{eq:triangle-inequality} and \eqref{eq:dummy-V-value}, it is easily verified that
\[
\begin{aligned}
V(\alpha) - V' \ \geq \ \alpha^{q(\horder)} r(\horder) \Big( \inpnorm{\sample{s}} \big( \inpnorm{\sample{s}} - \error_s \big) \Big)^{q(\horder)} - \alpha \inpnorm{\sample{s}} \Big( \regulizer + \summ{i = 1}{\dsize} \abs{\hvar{s}(i)} \Big)
\end{aligned}
\]
Combining the facts that the quantities \( a' \define  r(\horder) \Big( \inpnorm{\sample{s}} \big( \inpnorm{\sample{s}} - \error_s \big) \Big)^{q(\horder)} \) and \( b' \define \inpnorm{\sample{s}} \Big( \regulizer + \summ{i = 1}{\dsize} \abs{\hvar{s}(i)} \Big) \) are both positive, and the maximization problem \( \max\limits_{\alpha \geq 0} \; a \alpha^q - b \alpha \) admits an optimal solution for every \( a,b > 0 \) and \( q \in ]0,1[ \) with a positive optimal value. We conclude that \( 0 <  \max\limits_{\alpha \geq 0} \; a' \alpha^{q(\horder)} - b' \alpha \) and the maximum is achieved at some \( \alpha' > 0 \).\footnote{The key idea here is that the sub-linear term \( \alpha^q \) grows faster and takes values more than the linear term \( \alpha \) as \( \alpha \) increases from \( 0 \).} Therefore, we have \( V(\alpha') > V' \). Moreover, since \( \inprod{\separator{s}(\alpha')}{\sample{s}} - \error_s \inpnorm{ \separator{s}(\alpha') } \; \geq \; \alpha' \inpnorm{\sample{s}} \big( \inpnorm{\sample{s}} - \error_s \big) \; > \; 0 \) the collection \( (\separator{t}(\alpha'))_t \) is also feasible in \eqref{eq:active-lambda-problem}, which contradicts the optimality of \( (\separator{t}')_t \) in the problem \eqref{eq:active-lambda-problem}. Therefore, \( \inprod{\separator{t}'}{\sample{t}} - \error_t \inpnorm{\separator{t}'} > 0 \) for every \( t = 1,2,\ldots,T \), and consequently, \( (\separator{t}')_t \) is also an optimal solution to \eqref{eq:inactive-lambda-problem}. Uniqueness follows easily from the strong concavity of the objective function in \eqref{eq:inactive-lambda-problem} since \( \regulizer > 0 \). 

It remains to be shown that the dictionary \( \dictionary' \define \dictionary ((\separator{t}')_t) \) computed using \eqref{eq:optimal-D-for-lambda} along with \( (\separator{t}')_t \) is a saddle point to the min-max problem \eqref{eq:DL-D-update-min-max}. From \eqref{eq:optimal-D-for-lambda}, it easily follows that 
\begin{equation}
\label{eq:optimality-of-D-in-saddle-point}
\dictionary ((\separator{t}')_t) \; = \; \argmin_{\dictionary \; \in \; \unitdictionaryset} \ 
\frac{1}{\horizon} \summ{t = 1}{\horizon} 
\begin{cases}
\begin{aligned}
r(\horder) \Big( \inprod{\separator{t}'}{\sample{t}} & - \error_t \inpnorm{\separator{t}'} \Big)^{q(\horder)} \\
& - \Big(  \regulizer \inpnorm{\separator{t}'} + \inprod{\separator{t}'}{\dictionary \hvar{t}'} \Big)
\end{aligned}
\end{cases}
\end{equation}

Since \( (\separator{t}')_t \) is an optimal solution to \eqref{eq:inactive-lambda-problem}, it must satisfy the first order optimality conditions. Using Danskin's theorem and making use of \eqref{eq:optimal-D-for-lambda} to compute the gradients of \( \summ{i = 1}{\dsize} \inpnorm{ \frac{1}{T} \summ{t = 1}{T} \hvar{t}'(i) \separator{t} } \) w.r.t. \( \separator{t} \), the optimality conditions are written
\begin{equation}
\label{eq:optimality-condition-of-lambda-in-saddle-point}
0 \; = \; \frac{r(\horder) q(\horder) \big( \sample{t} - (\error_t / \inpnorm{\separator{t}'}) \separator{t}' \big) }{ \big( \inprod{\separator{t}'}{\sample{t}} - \error_t \inpnorm{\separator{t}'} \big)^{1 - q(\horder)} } - \frac{\regulizer}{\inpnorm{\separator{t}'}} \separator{t}' - \dictionary' \hvar{t}' \; ,
\end{equation}
for every \( t = 1,2,\ldots,T \). The first order conditions \eqref{eq:optimality-condition-of-lambda-in-saddle-point} immediately imply
\begin{equation}
\label{eq:optimality-of-lambda-in-saddle-point}
(\separator{t}')_t \; \in \;
\begin{cases}
\begin{aligned}
& \argmax_{(\separator{t})_t} \; 
&& \frac{1}{\horizon} \summ{t = 1}{\horizon} 
\begin{cases}
\begin{aligned}
r(\horder) \Big( \inprod{\separator{t}}{\sample{t}} & - \error_t \inpnorm{\separator{t}} \Big)^{q(\horder)} \\
 & - \Big( \regulizer \inpnorm{\separator{t}} + \inprod{\separator{t}}{\dictionary' \hvar{t}} \Big)
\end{aligned}
\end{cases}
\\
& \sbjto && 
\inprod{\separator{t}}{\sample{t}} - \error_{t} \inpnorm{\separator{t}} >  0 \quad \text{for all } t = 1,2,\ldots, \horizon ,
\end{aligned}
\end{cases}
\end{equation}
because, \eqref{eq:optimality-of-lambda-in-saddle-point} is a maximization of a concave function, and first order conditions are sufficient as well. Collecting \eqref{eq:optimality-of-D-in-saddle-point} and \eqref{eq:optimality-of-lambda-in-saddle-point}, we conclude the proposition.
\end{proof}

\begin{proof}[Proof of Proposition \ref{proposition:optimality-condition}]

\textsf{Necessary conditions} :
If \( \dictionary\opt \) is an optimal dictionary to the dictionary learning problem \eqref{eq:DL-samples} and for each \( t = 1,2,\ldots,T \), since \( \separatorset{\dictionary\opt}{t} \neq \emptyset \), we conclude form \cite{sheriff2019LIP} that every pair \( (\hvar{t}' , \separator{t}') \in H_t (\dictionary\opt) \times \samplecost{\dictionary\opt}{t} \cdot \separatorset{\dictionary\opt}{t} \) is a saddle point to the equivalent min-max form \eqref{eq:coding-problem-inf-sup-equivalent} of the encoding problem. Consequently, we have
\begin{equation}
\label{eq:optimality-necessary-lambda}
\separator{t}' \; \in \; 
\begin{cases}
\begin{aligned}
& \argmax_{\separator{t}} \ \ && 
\begin{cases}
\begin{aligned}
r (\horder) \Big( \inprod{\separator{t}}{\sample{t}} & - \error_t \inpnorm{\separator{t}} \Big)^{q (\horder)} \\
& - \Big( \regulizer \inpnorm{\separator{t}} + \inprod{\separator{t}}{\dictionary\opt \hvar{t}'} \Big)
\end{aligned}
\end{cases}
\\
& \sbjto &&  \inprod{\separator{t}}{\sample{t}} - \error_t \inpnorm{\separator{t}} > 0 \; .
\end{aligned}
\end{cases}
\end{equation}
Using the definition \eqref{eq:J-map-def} of \( J_t (\cdot , \cdot) \) and the inclusion \eqref{eq:optimality-necessary-lambda} for each \( t = 1,2,\ldots,T \), we conclude from Danskin's theorem that \( \frac{1}{T} \summ{t = 1}{T} ( \separator{t}' {\hvar{t}'}\transp ) \) is a sub-gradient of the mapping \( \unitdictionaryset \ni \dictionary \longmapsto \frac{1}{T} \summ{t = 1}{T} J_t(\dictionary , \hvar{t}') \), for every collection \( (\separator{t}')_t \) satisfying \( \separator{t}' \in \samplecost{\dictionary\opt}{t} \cdot \separatorset{\dictionary\opt}{t} \) for each \( t \).

From the optimality of \( \dictionary\opt \), we know from the proof of Lemma \ref{lemma:equivalent-DL} that the following inclusion holds
\begin{equation}
\label{eq:necessary-inclusion-D}
\dictionary\opt \; \in \; \argmin_{\dictionary \; \in \; \unitdictionaryset} \ \frac{1}{T} \summ{t = 1}{T} J_t(\dictionary , \hvar{t}') \; . 
\end{equation}
Applying first order necessary optimality conditions to \eqref{eq:necessary-inclusion-D} we conclude that there exists \( \separator{t}' \in \samplecost{\dictionary\opt}{t} \cdot \separatorset{\dictionary\opt}{t} \) for every \( t = 1,2,\ldots,T \), such that the necessary condition
\[
\dictionary\opt \; \in \; \argmax_{\dictionary \; \in \; \unitdictionaryset} \; \trace \left( \dictionary \cdot \frac{1}{T} \summ{t = 1}{T} \samplecost{\dictionary\opt}{t} \big( \separator{t}' {\hvar{t}'}\transp \big) \right) \text{ follows.}
\]

\textsf{Sufficient conditions}
If \eqref{eq:optimality-condition} holds for a dictionary \( \dictionary' \) and a collection \( (\hvar{t}' , \separator{t}')_t \) satisfying \( ( \hvar{t}' , \separator{t}' ) \in H_t (\dictionary') \; \times \; \samplecost{\dictionary\opt}{t} \cdot \separatorset{\dictionary'}{t} \) for every \( t = 1,2,\ldots,T \). On the one hand we see that the inclusion
\begin{equation}
\label{eq:sufficient-conditions-inclusion-h}
(\hvar{t}')_t \in \argmin_{(\hvar{t})_t \subset \costatomset} \; \frac{1}{T} \summ{t = 1}{T} J_t (\dictionary' , \hvar{t}) \; ,
\end{equation}
follows straightforward from the fact that \( H_t (\dictionary') = \argmin\limits_{\hvar{t} \; \in \; \costatomset} \; J_t (\dictionary',\hvar{t}) \) for every \( t = 1,2,\ldots,T \). 

On the other hand, observe that the optimization problem 
\begin{equation}
\label{eq:sufficient-conditions-inclusion-D-1}
\min_{\dictionary \in \unitdictionaryset} \  \frac{1}{T} \summ{t = 1}{T} J_t (\dictionary , \hvar{t}') ,    
\end{equation}
is convex for which first order conditions are sufficient as well. Since \( \frac{1}{T} \summ{t = 1}{T} ( \separator{t}' {\hvar{t}'}\transp ) \) is a sub-gradient of the map \( \unitdictionaryset \ni \dictionary \longmapsto \frac{1}{T} \summ{t = 1}{T} J_t(\dictionary , \hvar{t}') \), for any collection \( (\separator{t}')_t \) satisfying \( \separator{t}' \in \samplecost{\dictionary\opt}{t} \cdot \separatorset{\dictionary\opt}{t} \) for each \( t \). The condition \eqref{eq:optimality-condition} (which is same as the first order condition for \eqref{eq:sufficient-conditions-inclusion-D-1}) is sufficient to conclude the inclusion
\begin{equation}
\label{eq:sufficient-conditions-inclusion-D}
\dictionary' \in  \argmin_{\dictionary \in \unitdictionaryset} \; \frac{1}{T} \summ{t = 1}{T} J_t (\dictionary , \hvar{t}') .
\end{equation}

Collecting \eqref{eq:sufficient-conditions-inclusion-h} and \eqref{eq:sufficient-conditions-inclusion-D}, stationarity of \( \dictionary' \) to \eqref{eq:DL-samples} follows immediately from the convexity of \( J_t (\cdot , \cdot) \) individually in both the arguments. The proof is now complete.
\end{proof}

\vskip 0.2in
\bibliographystyle{alpha}
\bibliography{ref}

\end{document}